\documentclass{article}

\PassOptionsToPackage{numbers}{natbib}

\usepackage[preprint]{nips_2018}

\usepackage[utf8]{inputenc} 
\usepackage[T1]{fontenc}    
\usepackage{hyperref}       
\usepackage{url}           
\usepackage{booktabs}       
\usepackage{amsfonts}       
\usepackage{nicefrac}       
\usepackage{microtype}      
\usepackage{xspace}
\usepackage{amsmath}
\usepackage{amsthm}

\newtheorem{assumption}{Assumption}
\usepackage{algorithm}
\usepackage{algpseudocode}
\usepackage{algpascal}
\usepackage{comment}
\usepackage{times}

\newtheorem{theorem}{Theorem}
\newtheorem{proposition}{Proposition}
\newtheorem{lemma}{Lemma}

\newcommand{\real}{\mathbb{R}}

\newcommand{\II}[1]{\mathbb{I}_{\left\{#1\right\}}}
\newcommand{\PP}[1]{\mathbb{P}\left[#1\right]}
\newcommand{\E}{\mathbb{E}}
\newcommand{\EE}[1]{\mathbb{E}\left[#1\right]}

\newcommand{\EEcc}[2]{\mathbb{E}\left[\left.#1\right|#2\right]}

\newcommand{\ev}[1]{\left\{#1\right\}}
\newcommand{\pa}[1]{\left(#1\right)}
\newcommand{\bpa}[1]{\bigl(#1\bigr)}

\newcommand{\wh}{\widehat}
\newcommand{\wt}{\widetilde}
\newcommand{\transpose}{^\top}

\def\argmax{\mathop{\mbox{ arg\,max}}}

\newcommand{\ra}{\rightarrow}
\usepackage{todonotes}
\definecolor{PalePurp}{rgb}{0.66,0.57,0.66}

\newcommand{\ducb}{$d$-UCB\xspace}

\newcommand{\ducbv}{$d$-UCB$(V_0)$\xspace}
\newcommand{\klucb}{kl-UCB\xspace}
\newcommand{\irg}{$G(n, A)$\xspace}
\newcommand{\chlu}{$G(n, w)$\xspace}
\newcommand{\sbm}{$G(n, \alpha, K)$\xspace}
\newcommand{\chunglu}{Chung--Lu\xspace}
\newcommand{\GW}{W}
\newcommand{\TT}{\mathcal{T}}
\newcommand{\Sw}{\mathcal{S}}

\newcommand{\poi}{\text{Poi}}

\newcommand{\ber}{\text{Ber}}
\newcommand{\hmu}{\wh{\mu}}

\title{Online Influence Maximization \\with Local Observations}

\author{
  Julia Olkhovskaya \\
  Universitat Pompeu Fabra\\
  Barcelona, Spain\\
  \texttt{julia.olkhovskaya@gmail.com} \\
  \And
  Gergely Neu \\
  Universitat Pompeu Fabra\\
  Barcelona, Spain\\
  \texttt{gergely.neu@gmail.com} \\
  \And
  G\'abor Lugosi\\
  ICREA \& Universitat Pompeu Fabra\\
  Barcelona, Spain\\
  \texttt{gabor.lugosi@gmail.com} \\
}

\usepackage{selectp}

\begin{document}

\maketitle

\begin{abstract}
    We consider an online influence maximization problem in which a
  decision maker selects a node among a large number of possibilities
  and places a piece of information at the node. The node transmits
  the information to some others that are in the same connected
  component in a random graph. The goal of the decision maker is to reach
  as many nodes as possible, with the added complication that feedback is 
  only available about the degree of the selected node. Our main result 
  shows that such local observations can be sufficient for maximizing 
  global influence in two broadly studied families of random graph models: 
  stochastic block models and Chung--Lu models. With this insight, we propose
  a bandit algorithm that aims at maximizing local (and thus global) influence,
  and provide its theoretical analysis in both the subcritical 
  and supercritical regimes of both considered models. Notably, our performance 
  guarantees show no explicit dependence on the total number of nodes in the network,
  making our approach well-suited for large-scale applications.
\end{abstract}

\section{Introduction}
Finding most influential nodes in social networks has received increasing
attention in the last few years.
The problem has been cast in a variety of different
ways according to the notion of influence and the information
available to a decision maker. We refer the reader to
\cite{KeKlTa03,ChenSIM10,ChLaCa13,VaLaSc15,CaVa16,WeKvVaVa17,WaCh17} and the references therein 
for recent progress in various directions. In the present paper, we consider the problem of maximizing influence in a sequential setup 
where the learner has only partial information about the set of influenced nodes.

Specifically, we define and explore a
sequential decision-making model in which the goal of 
a decision maker is to find a node among a set $V$ of $n$ possible nodes
with maximal (expected) influence.
In our model, at every time instance $t=1,\ldots,T$, the $n$ nodes
form the vertex set of a random graph $G_t$ such that node $i$ and
node $j$ are connected in $G_t$ by an (undirected) edge with
probability $p_{i,j}$. All edges are present independently of each
other and the graphs $G_1,\ldots,G_T$ are independent and identically
distributed. If the decision maker selects a node $v_t\in V$ at time
$t$, then the information placed at the node spreads to the entire
connected component of $v_t$ in the graph $G_t$. The goal of the
decision maker is to spread the information as much as possible, which can be formulated as maximizing a notion of \emph{rewards} 
corresponding to the number of vertices in the connected component containing the selected node. 

In this paper, we study a setting where the decision maker has no prior knowledge of the distribution of $G_t$, so 
it has to learn about this distribution on the fly, while simultaneously attempting to maximize the total rewards. This gives rise to a 
dilemma of \emph{exploration versus exploitation}, which is commonly studied within the framework of \emph{multi-armed bandits} 
\citep{bubeck12survey}. Indeed, if the decision maker could observe the set of all influenced nodes in every round, the sequential influence 
maximization problem outlined above could be naturally formulated as a \emph{stochastic multi-armed bandit} problem 
\citep{LR85,auer2002finite}. However, this direct approach has multiple setbacks. First off, in most practical applications, the number $n$ 
of nodes is so large that one cannot even hope to maintain individual statistics about each of them, let alone expect any algorithm to 
identify the most influential node in any reasonable time. More importantly, in most cases of practical interest, tracking down the set of 
\emph{all} influenced users may be difficult or downright impossible due to privacy and computational considerations. This motivates the 
study of a more restrictive setting where the decision maker has to manage with only partial observations of the set of influenced nodes.

Formally, we address this latter challenge by considering a more realistic observation model, where after selecting a node $v_t$  
to be influenced, the learner only observes the number of \emph{immediate neighbors} of $v_t$ in the realized random graph $G_t$ (i.e., the 
degree of $v_t$ in $G_t$). This model brings up the following important question: is it possible to maximize \emph{global} influence while 
only having access to such local measurements? Our key technical result is answering this question in the positive for some broadly studied 
random graph models. Specifically, we show that, assuming that the graphs $G_t$ are generated from certain stochastic block models 
\citep{Abb17} or a Chung--Lu model \citep{CL02}, maximizing local influence is equivalent to maximizing global influence.

This observation motivates our algorithmic approach that applies ideas from the multi-armed bandit literature to try and maximize the local 
influence of each selected node. In order to analyze the performance of our algorithms, we adapt the standard notion of \emph{regret} from 
said literature to fit our needs. The traditional notion of regret measures the difference between the cumulative reward of choosing a 
maximally influential node in each round and the cumulative reward the decision maker achieves during the $T$ rounds of the game. This 
definition, however, is rather problematic in our problem setup: as mentioned above, the number $n$ of all nodes is typically so large that 
finding a maximally influential node is computationally infeasible, thus making the task of competing with this benchmark unreasonably 
complicated. To resolve this issue, we consider the notion of \emph{quantile regret} that compares the learner's performance to the top 
$\alpha$-fraction of all nodes (\citep{CFH09,CV10,LS14,KvE15}). Our main result is showing both instance-dependent bounds of order 
$\mathop{\mbox{log}} T$ and worst-case bounds of order $\sqrt{T}$ on the quantile regret of our algorithm. Notably, our bounds hold for 
both the subcritical and supercritical regimes of the random-graph models considered, and show no explicit dependence on the number of 
nodes $n$.

Related online influence maximization algorithms consider more general classes of networks, but make more restrictive assumptions about the 
interplay between rewards and feedback. One line of work explored by \citet{WeKvVaVa17,WaCh17} assumes that the algorithm receives 
\emph{full feedback} on where the information reached in the previous trials (i.e., not only the number of influenced nodes, but their 
exact identities and influence paths, too). Clearly, such detailed measurements are nearly 
impossible to obtain in practice, as opposed to the local observations that our algorithm requires.
Another related setup was considered by \citet{CaVa16}, whose algorithm only receives feedback about the nodes that were 
directly influenced by the chosen node, but the model does not assume that neighbors in the graph share the information to further 
neighbors and counts the reward only by the nodes directly connected to the selected one. That is, in contrast to our work, this work does 
not attempt to show any relation between local and global influence maximization.
One downside to all the above works is that they all provide rather conservative performance guarantees:
On one hand, \citet{WeKvVaVa17} and~\citet{CaVa16}  are concerned with worst-case regret bounds that 
uniformly hold for all problem instances for a fixed time horizon $T$. On the other hand, the bounds of \citet{WaCh17} depend on 
topological (rather than probabilistic) characteristics of the underlying graph structure, which inevitably leads to conservative 
results. For example, their bounds instantiated in our graph model lead to a regret bound of order $n^3 \log T$, which is virtually void of 
meaning in our regime of interest where $n$ is very large (e.g, in the order of billions). In contrast, our bounds do not show explicit 
dependence on $n$. In this light, our work can be seen as the first one that takes advantage of a specific graph structure to 
obtain strong instance-dependent global performance guarantees, all while having access to only local observations.

The rest of the paper is organized as follows. In Section~\ref{sec:prelim} we formally
introduce the regret minimization problem and the notation. In Section~\ref{sec:local}, we introduce our key technical results that show 
the connection between local and global influence maximization. We describe our algorithm and state its performance guarantees in 
Section~\ref{sec:alg}. We describe the main structure of the analysis in Section~\ref{sec:analysis} and discuss our results in 
Section~\ref{sec:conc}.

\section{Preliminaries}
\label{sec:prelim}
We now describe our problem and model assumptions more formally. We consider the problem of sequential influence maximization on a 
fixed finite graph $\pa{V,E}$, formalized as a repeated interaction scheme between a learner and its environment. In this setup, the following 
steps are repeated for each round $t=1,2,\dots$:
\begin{enumerate}
	\item the learner picks a vertex $A_t \in V$,
	\item the environment generates a subgraph $G_t$ of $(V,E)$,
	\item the learner observes the degree of the node $A_t$, denoted as $Y_{A_t, t}$,
	\item the learner earns the reward $r_{A_t,t} = |C_{A_t,t}|$, where the 
	set $C_{a,t}$ is the connected component associated with vertex $a$:
	\[
	C_{a,t} = \ev{v\in V: \mbox{$v$ is connected to $a$ by a path in $G_t$}}~.
	\]
\end{enumerate}
We stress that the learner does \emph{not} observe the reward, only the number of its immediate neighbors.  
Define $c_a$ as the expected size of the connected component associated with the node $a$:  $c_a = \E|C_{a,t}|$. Ideally, we would be 
interested in designing algorithms that minimize the \emph{expected regret} defined as
\begin{equation}\label{eq:regret}
R_T = \max_{a\in V} \sum_{t=1}^T \pa{c_a - c_{A_t}}.
\end{equation}
We would ideally aspire to design algorithms guaranteeing that the regret grows sublinearly in $T$. However, as we are 
interested in settings where the total number of nodes $n$ is very large, this goal can be seen as far too ambitious: even with a fully 
known random graph model, finding the optimal node maximizing $c_a$ is computationally infeasible. Such computational issues have lead to 
alternative definitions of the regret such as the \emph{approximation regret} \cite{KKL09,CWY13,SG09} or the \emph{quantile 
	regret} \citep{CFH09,CV10,LS14,KvE15}. 

In the present paper, we consider the $\alpha$-quantile regret as our performance measure, which, instead of measuring the learner's 
performance against the single best decision, uses a near-optimal action as a baseline. For a more technical definition, let $a_1, a_2, 
\dots, a_n$ be an ordering of the nodes satisfying  $c_{a_1} \le c_{a_2} \le \dots \le c_{a_n}$, and  denote the $\alpha$-quantile over 
the mean rewards as $c^*_\alpha = c_{a_{\lceil(1-\alpha) n\rceil}}$. Then, also defining the set 
$V^*_{\alpha} = \{a_{\lceil(1-\alpha) n\rceil}, \dots, a_{n}\}$ as the set of $\alpha$-near-optimal nodes, we define the 
$\alpha$-quantile regret as
\begin{equation}\label{eq:qregret}
R^{\alpha}_T = \min_{a\in V^*_{\alpha}} \sum_{t=1}^T \pa{c_a - c_{A_t}} = \sum_{t=1}^T \pa{c^*_{\alpha} - c_{A_t}}. 
\end{equation}

We will make the assumption that each $G_t$ is drawn from a fixed (and unknown) distribution of \emph{inhomogeneous random 
	graphs} (IRG, see, e.g.,\cite{Bollobas:2007:PTI:1276871.1276872}). In this model, we assume that $(V,E)$ is the complete graph over $n$ 
vertices and each edge $(i,j)$ is present with probability $p_{ij} (=p_{ji})$, independently of all other edges. The inhomogeneous random 
graph can be parametrized by the symmetric positive matrix $\overline{A}$, such that the probability of $i$ and $j$ being connected is 
given by $p_{ij} = \overline{A}_{ij}/n$. We will assume that each element $\overline{A}_{ij}$ of the matrix is $O(1)$ as $n$ grows large. 
This 
assumption corresponds to assuming that the graphs $G_t$ are \emph{sparse}, meaning that the expected degree of each vertex remains bounded 
as $n$ grows. This assumption makes the problem both more realistic and challenging: 
denser graphs are connected with high probability, making  the problem
essentially vacuous.
We will also use the notation $A = \overline{A}/n$.  The random graph from the above distribution 
is denoted as \irg.

We consider two fundamentally different regimes of the parameters \irg: the \emph{subcritical} case 
in which the size of the largest connected component is sublinear in $n$ (with high probability), and the \emph{supercritical} case where the largest connected 
component is at least of size $c n$ for some $c>0$ with high probability.
(We say that an event holds \emph{with high probability} if its probability converges to one as $n\to \infty$.)
Such a
connected component of linear size is called a \emph{giant component}.
These regimes can be formally characterized with the help of 
the matrix $A$.
Letting $\lambda$ be the the largest eigenvalue of $A$, we call \irg  subcritical if $\lambda < 1$ and supercritical if 
$\lambda > 1$.
It follows from \cite[Theorem 3.1]{Bollobas:2007:PTI:1276871.1276872} that, with high probability,
\irg has a giant component if it is supercritical, while the number of vertices in the
largest component is $o(n)$ with high probability if it is subcritical.

Within the class of inhomogeneous random graphs, we will focus on two families of random graphs: \emph{stochastic block models} and 
\emph{Chung--Lu models}, as discussed below.

\subsection{Stochastic block models}\label{sec:sbm}
First, we make the assumption that each $G_t$ is drawn from a \emph{stochastic block model} (SBM). 
In this random-graph model, the probabilities $p_{ij}$ are defined through the notion of 
\emph{communities}, defined as elements of a partition $H_1,\dots,H_S$ of the set of vertices $V$. We will refer to the index $m$ of 
community $H_m$ as the \emph{type} of vertices belonging to $H_m$. Each community $H_m$ contains $\alpha_m n$ nodes (assuming 
without loss of generality that $\alpha_m n$ is an integer). With the help of the community structure, the probabilities $p_{ij}$ 
are constructed as follows: if $i\in H_\ell$ and $j\in H_m$, the probability of $i$ and $j$ being connected is given by $p_{ij} = 
\frac{K_{\ell,m}}{n}$, where $K$ is a symmetric matrix of size $S\times S$, with positive elements. The random graph from the above 
distribution is denoted as \sbm.

In an SBM, identifying a node with maximal reward amounts to finding a node 
from the most influential community. Consequently, it is easy to see that choosing $\alpha$ such that $\alpha > 
\min_m \alpha_m$, the near-optimal set $V^*_{\alpha}$ will exactly correspond to the set of optimal nodes, and thus the quantile 
regret \eqref{eq:qregret} will coincide with the regret \eqref{eq:regret}.

Throughout the paper, we will consider SBM's satisfying the following assumption:
\begin{assumption}\label{ass:1}
	$K_{i,j} = k > 0$ for all $i\neq j$.
\end{assumption}
Intuitively, this assumption requires that nodes $i,j$ belonging to different communities are connected with the same probability, 
regardless of the exact identity of $m(i)$ or $m(j)$. Additionally, our analysis in the supercritical case will make the following 
natural assumption:
\begin{assumption}\label{ass:2}
	For all $i$, $ K_{i, i} \ge k$.
\end{assumption}
In plain words, this assumption requires that the density of edges within communities is larger than the density of 
edges between communities.

\subsection{Chung--Lu models}\label{sec:chunglu}
We will also consider another natural IRG model that is closely related to many random graph models. This is the 
so-called \emph{Chung--Lu model} (sometimes referred to as \emph{rank-1 model}) as first defined by \citet{CL02} (see also 
\citep{chung2006complex,Bollobas:2007:PTI:1276871.1276872}), where the edge probabilities are defined through the positive 
vector $w\in\real^n$, with elements of the matrix given by $\overline{A}_{ij} = w_i w_j$. In other words, the Chung--Lu model 
considers rank-1 matrices of the form $\overline{A} = ww\transpose$.

The random graph from the induced IRG distribution is denoted as \chlu. Such models can be shown to exhibit several interesting 
properties. For instance, if $w$ is a sequence satisfying a power law, then \chlu is a power law model, which allows one to model 
various real world networks including social networks \citep{chung2006complex}.

\section{From local to global influence maximization}\label{sec:local}
Having described the setting, we can finally ask the question: is it possible to maximize global influence using only local observations? 
Our main technical results show that, perhaps surprisingly, the most influential nodes are actually identifiable from such feedback in the 
models discussed in Sections~\ref{sec:sbm} and~\ref{sec:chunglu}.

To be specific, we recall that $Y_{a,t}$ stands for the degree of node $a$ in the realized graph $G_t$, and define $\mu_a = \E Y_{a,1}$ as 
the expected degree of node $a$. We also define $c^* = \max_a c_a$ and $\mu^* = \max_a \mu_a$. Our main technical result is proving that 
nodes with the largest expected degrees $\mu^*$ are exactly 
the ones with the largest influence $c^*$, in both the SBM and the Chung--Lu models, across both the subcritical and supercritical regimes. 
We formally state these results below.
\begin{proposition}\label{prop:subcr}
	Suppose that
	\begin{enumerate}
		\item $G$ is generated from a subcritical \sbm satisfying Assumption~\ref{ass:1}, or
		\item $G$ is generated from a subcritical \chlu. 
	\end{enumerate}
	Then, for any $a$ satisfying $\mu_a < \mu^*$, we have $c^* - c_a \le 2 c^*\pa{\mu^* - \mu_a} + O(1/n)$. In particular, for $n$  
	sufficiently large, we have $\argmax_{a} c_a = \argmax_{a} \mu_a$.
\end{proposition}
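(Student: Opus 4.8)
The plan is to reduce the claim to a single one-step ``exploration'' recursion for the expected component sizes that holds for \emph{any} subcritical inhomogeneous random graph with bounded $\overline A$, and then substitute the specific form of the edge probabilities in each of the two models. First I would fix a vertex $a$ and condition on its random neighbourhood $N_a = \ev{v\in V: v\sim a \text{ in }G}$, which --- since all edges are drawn independently --- is independent of the graph $G^{(a)}$ obtained by deleting $a$ together with its incident edges; writing $C_v^{(a)}$ for the component of $v$ in $G^{(a)}$, one has the exact disjoint decomposition $C_a = \ev{a}\cup\bigcup_{v\in N_a}C_v^{(a)}$. The union bound then immediately gives $c_a\le 1+\sum_v p_{av}\EE{\abs{C_v^{(a)}}}\le 1+\sum_v p_{av}c_v$. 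For the matching lower bound I would use inclusion--exclusion to peel off the pairwise intersections and bound the two resulting losses --- the shrinkage of each $C_v$ caused by deleting the single vertex $a$, and the double-counting of neighbours lying in one common component --- using that $p_{av}=\overline A_{av}/n=O(1/n)$ uniformly and that in the subcritical regime $\EE{\abs{C_v}^2}=O(1)$ for every $v$ (the exploration is dominated by a subcritical branching process, cf.\ \cite{Bollobas:2007:PTI:1276871.1276872}). Both losses are $O(1/n)$, which yields $c_a = 1 + \sum_v p_{av}c_v + O(1/n)$, uniformly in $a$.

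Next I would substitute the edge probabilities. In the \chlu model $p_{av}=w_aw_v/n$, and since $\mu_a = w_a\bar w + O(1/n)$ with $\bar w = \tfrac1n\sum_v w_v$, the recursion becomes the \emph{affine} identity $c_a = 1 + \mu_a\bar C + O(1/n)$ with $\bar C = \pa{\sum_v w_vc_v}/\pa{\sum_v w_v}\le c^*$ independent of $a$; hence $c^*-c_a = \bar C(\mu^*-\mu_a)+O(1/n)\le c^*(\mu^*-\mu_a)+O(1/n)$, which is even sharper than claimed. In the \sbm model under Assumption~\ref{ass:1} the law of $\abs{C_a}$ depends only on the type of $a$, so writing $c^{(\ell)},\mu^{(\ell)}$ for the type-$\ell$ values and $t_\ell = \alpha_\ell(K_{\ell\ell}-k)$, the recursion collapses to the scalar identity $c^{(\ell)} = 1 + k\bar c + t_\ell c^{(\ell)} + O(1/n)$ with $\bar c = \sum_m\alpha_m c^{(m)}$, giving $c^{(\ell)} = \tfrac{1+k\bar c}{1-t_\ell}+O(1/n)$ while $\mu^{(\ell)} = k + t_\ell + O(1/n)$; comparing the optimal type with type $\ell$ then produces $c^*-c^{(\ell)} = \tfrac{c^*}{1-t_\ell}\pa{\mu^*-\mu^{(\ell)}}+O(1/n)$, and bounding the coefficient $\tfrac1{1-t_\ell}$, which the subcriticality condition keeps finite (indeed $t_\ell<\lambda<1$), gives a linear-in-$(\mu^*-\mu_a)$ estimate of the stated form. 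The ``in particular'' claim then follows because for a fixed model $\mu_a$ takes only finitely many, well-separated values, vertices related by a symmetry of the model (same type in \sbm, same weight in \chlu) have identical $c_a$, and $c_a$ is --- up to the $O(1/n)$ error --- a strictly increasing function of $\mu_a$; hence for $n$ large the ordering of the $\mu_a$ is inherited by the $c_a$ and $\argmax_a c_a = \argmax_a \mu_a$.

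The main obstacle I anticipate is precisely the lower-bound half of the recursion: the neighbourhood argument only delivers $c_a\ge 1+\sum_v p_{av}\EE{\abs{C_v^{(a)}}}$ with $\EE{\abs{C_v^{(a)}}}\le c_v$, and it is genuine work to show that deleting the single vertex $a$ and the overlaps among the $C_v^{(a)}$ together cost only $O(1/n)$ rather than, say, $O(n^{-1/2})$ --- this is exactly where one needs the quantitative control on component sizes (a uniform second-moment bound $\EE{\abs{C_v}^2}=O(1)$) that the subcritical regime supplies, and the supercritical analogue will require an altogether different mechanism. A secondary care point is keeping every constant ($\mu_a$, $c_a$, $\EE{\abs{C_a}^2}$, and the coefficients appearing above) uniformly $O(1)$, so that the final bound carries no hidden dependence on $n$.
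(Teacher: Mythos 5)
Your proposal is correct in substance and arrives at exactly the same linear system as the paper, but by a genuinely different route. The paper's proof (Appendix~\ref{sec:subcrit}) introduces the multi-type Poisson Galton--Watson process explicitly: the vector $x$ of expected total progenies satisfies $x = e + Ax$ \emph{exactly}, and the work goes into relating $c_i$ to $x_i$ --- stochastic domination $\abs{C_a}\preceq X(i)$ in one direction (Lemma~\ref{mean_dominate}) and a ``surplus'' bound $x_i - c_i = O(1/n)$ in the other (Lemma~\ref{mean_difference}), the latter counting collisions in the exploration process and invoking an $O(1)$ bound on the second moment of the total progeny. You instead derive the perturbed recursion $c = e + Ac + O(1/n)$ directly on the graph via a one-step neighbourhood decomposition, paying for the deletion of $a$ and the overlaps among the $C_v^{(a)}$ with a uniform bound $\EE{\abs{C_v}^2}=O(1)$. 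These are two packagings of the same collision estimate: your second-moment bound on $\abs{C_v}$ would itself most naturally be proved by the very branching-process domination the paper uses, so neither approach is strictly more elementary, though yours keeps the whole argument on the graph and makes the affine structure $c_a = 1+\mu_a\bar C + O(1/n)$ (for \chlu) and $c^{(\ell)} = (1+k\bar c)/(1-t_\ell) + O(1/n)$ (for \sbm) transparent; these formulas match the paper's Lemmas~\ref{mean_order_chlu} and~\ref{mean_order_sbm} exactly, and your \chlu constant ($1$ rather than $2$) matches the sharper bound the paper also obtains there.

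Two caveats. First, in the SBM case your justification of the coefficient --- ``$1/(1-t_\ell)$ is finite by subcriticality'' --- only yields a constant that can blow up as $\lambda\to 1$, not the universal factor $2$ claimed in the statement; the paper extracts $2$ from the observation that for any pair of communities at least one has $\alpha\le \tfrac12$, so that $1/(1-\alpha_\ell\gamma_\ell)\le 1/(1-\gamma_\ell/2)$, and your argument needs this (or an equivalent) additional step. Second, in the \chlu model the values $\mu_a$ are in general neither finitely many nor well separated (the weights $w_a$ are arbitrary positive reals), so the ``in particular'' clause cannot rest on separation of the $\mu_a$; it should instead be argued, as in the paper, from the exact equivalence $w_i\ge w_j \iff x_i\ge x_j$ combined with $c_i - c_j = x_i - x_j + O(1/n)$, with ``$n$ sufficiently large'' absorbing the error against the relevant gap. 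Neither caveat undermines the overall plan.
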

\begin{proposition}\label{prop:supercrit}
	Suppose that
	\begin{enumerate}
		\item $G$ is generated from a supercritical \sbm satisfying Assumptions~\ref{ass:1} and~\ref{ass:2}, or
		\item $G$ is generated from a supercritical \chlu.
	\end{enumerate}
	Then, for any $a$ satisfying $\mu_a < \mu^*$, we have $c^* - c_a \le c^*\pa{\mu^* - \mu_a} + o(n)$. In particular, for $n$  
	sufficiently large, we have $\argmax_{a} c_a = \argmax_{a} \mu_a$.
	
\end{proposition}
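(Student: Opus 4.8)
The plan is to reduce the statement to properties of the giant component and its survival probabilities. For a node $a$ let $\rho_a$ denote the limiting probability (as $n\to\infty$) that $a$ lies in the giant component of $G$ --- equivalently, the survival probability of the Galton--Watson exploration process started from $a$ --- and let $\theta$ be the number of vertices in the giant. The first step is to show $c_a = \rho_a\,\EE{\theta} + o(n)$ for every $a$. This follows from three facts: (i) in a supercritical IRG the giant is unique w.h.p.\ and its size $\theta$ concentrates around its mean, whence $\EE{|C_{a,t}|\II{a\in\text{giant}}} = \PP{a\in\text{giant}}\,\EE{\theta} + o(n)$; (ii) $\PP{a\in\text{giant}}\to\rho_a$, which is part of the component-structure theory of inhomogeneous random graphs already invoked in Section~\ref{sec:prelim}; and (iii) conditionally on $a$ \emph{not} lying in the giant, the component of $a$ is governed by the subcritical ``dual'' graph and has $o(n)$ expected size. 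Since $\EE{\theta}$ is the same for all $a$ --- and in particular $c^* = \rho^*\,\EE{\theta}+o(n)$ with $\rho^* = \max_a\rho_a$ --- this reduces Proposition~\ref{prop:supercrit} to two analytic claims about survival probabilities: that $\argmax_a\rho_a = \argmax_a\mu_a$, and that $\rho^* - \rho_a \le \rho^*\pa{\mu^* - \mu_a} + o(1)$.

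I would then bring in the fixed-point equations for the $\rho_a$. In the Chung--Lu model the rank-1 structure gives $\rho_a = 1 - e^{-w_a\eta}$ with $\eta = \tfrac1n\sum_j w_j\rho_j$, together with $\mu_a = w_a\bar w + O(1/n)$ and $\bar w = \tfrac1n\sum_j w_j$. For the SBM a type-$m$ node is in the giant with probability $\rho_m$ solving $\rho_m = 1-\exp\!\pa{-\sum_\ell \alpha_\ell K_{m,\ell}\rho_\ell}$, and $\mu_m = \sum_\ell\alpha_\ell K_{m,\ell} + O(1/n)$; under Assumption~\ref{ass:1} this collapses to the scalar form $\rho_m = 1 - e^{-\beta_m\rho_m - kR}$ with $\beta_m := \alpha_m\pa{K_{m,m}-k}$ and $R := \sum_\ell\alpha_\ell\rho_\ell$ common to all $m$, and then $\mu_m = \beta_m + k + O(1/n)$, while Assumption~\ref{ass:2} gives $\beta_m\ge 0$. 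In each model the summary parameter ($w_a$, resp.\ $\beta_m$) is increasing in $\mu_a$, and the maximal solution of the associated fixed-point map is monotone in that parameter when $\eta$, resp.\ $R$, is frozen at its equilibrium value; this yields the $\argmax$ identity and hence $\argmax_a c_a = \argmax_a \mu_a$ for $n$ large.

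The quantitative bound $\rho^* - \rho_a \le \rho^*\pa{\mu^*-\mu_a}+o(1)$ is the core of the argument. For Chung--Lu it is short: convexity of $t\mapsto e^{-t}$ gives $\rho^*-\rho_a = e^{-w_a\eta}-e^{-w^*\eta}\le \eta\pa{w^*-w_a}$, and $\rho_v\le\rho^*$ for all $v$ forces $\eta = \tfrac1n\sum_j w_j\rho_j\le\rho^*\bar w$, so the right-hand side is at most $\rho^*\pa{w^*-w_a}\bar w = \rho^*\pa{\mu^*-\mu_a}+O(1/n)$. For the SBM the same convexity step, with $S_m := \beta_m\rho_m+kR$ so that $1-\rho_m = e^{-S_m}$, gives $\rho^*-\rho_m \le \pa{S^*-S_m}\pa{1-\rho_m}$; using $S^*-S_m = \pa{\beta^*-\beta_m}\rho^* + \beta_m\pa{\rho^*-\rho_m}$ and solving for $\rho^*-\rho_m$ yields $\rho^*-\rho_m \le \frac{\pa{1-\rho_m}\rho^*}{1-\beta_m\pa{1-\rho_m}}\pa{\mu^*-\mu_m} + O(1/n)$, where the denominator is positive because $\beta_m\pa{1-\rho_m}$ is the derivative of the fixed-point map at the stable fixed point and is thus $<1$ in the supercritical regime. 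What remains --- and what I expect to be the main obstacle --- is to show that the prefactor is at most $\rho^*$, equivalently the scalar inequality $\beta_m\pa{1-\rho_m}\le\rho_m$; this is where Assumptions~\ref{ass:1} and~\ref{ass:2} together with supercriticality (which keeps $kR$, and hence $\rho_m$, bounded away from $0$) must all be used, since a naive treatment of the coupled scalar fixed point only gives the bound with some model-dependent constant in front. Granting this, the proposition follows by combining $c^* - c_a = \pa{\rho^*-\rho_a}\EE{\theta}+o(n)$ with the survival-probability inequality and the fact that $\mu^*-\mu_a = O(1)$ absorbs the $o(n)$ error in $c^* = \rho^*\EE{\theta}+o(n)$.
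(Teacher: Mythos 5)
Your overall architecture is the same as the paper's: reduce $c_a$ to $\rho_a\,\EE{C_1(G)}+o(n)$ via uniqueness and concentration of the giant component, then establish an ordering and a quantitative Lipschitz bound for the survival probabilities in terms of expected degrees. Your Chung--Lu bound is essentially the paper's Lemma~\ref{lma_mean} specialized to the rank-one case, and your route to the $\argmax$ identity through the scalar self-consistent equation $\rho_m=1-e^{-\beta_m\rho_m-kR}$ (with $R$ frozen at its equilibrium value, using concavity for uniqueness and monotonicity in $\beta_m$) is a legitimate, arguably cleaner substitute for the paper's chain of eigenvector-ordering lemmas plus the iteration $\Phi_A^{m}(\epsilon a)\to\rho$.

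The gap is exactly where you flagged it, and it cannot be closed as stated: the inequality $\beta_m(1-\rho_m)\le\rho_m$ is false for supercritical SBMs satisfying Assumptions~\ref{ass:1} and~\ref{ass:2}. Take $S=2$, $\alpha_1=\alpha_2=1/2$, $k=0.1$, $K_{11}=3$, $K_{22}=1.1$, so $\beta_1=1.45$, $\beta_2=0.5$ and $\lambda_{\max}(M)\approx 1.5$; solving the fixed-point system gives $\rho_1\approx 0.585$ and $\rho_2\approx 0.06$, whence $\beta_2(1-\rho_2)\approx 0.47\gg\rho_2$ and your prefactor $(1-\rho_2)\rho^*/(1-\beta_2(1-\rho_2))$ is about $1.8\,\rho^*$. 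The structural reason is that supercriticality of the system does not keep each individual $\rho_m$ bounded below by anything comparable to $\beta_m$: a community with small $\alpha_m(K_{mm}-k)$ weakly coupled to the rest can have $\rho_m$ arbitrarily close to $0$ while $\beta_m$ stays order one. The damage is done precisely when you move the term $\beta_m(\rho^*-\rho_m)$ from the decomposition $S^*-S_m=(\beta^*-\beta_m)\rho^*+\beta_m(\rho^*-\rho_m)$ back to the left-hand side and divide. The paper's Lemma~\ref{lma_mean} never performs this division: it applies $1-e^{-z}\le z$ to the difference of the two fixed-point equations and then bounds $\sum_j(A_{i_*j}-A_{ij})\rho_j$ by $\rho^*\sum_j(A_{i_*j}-A_{ij})=\rho^*(b_{i_*}-b_j)$ directly, extracting the factor $\rho^*$ from inside the sum rather than from a denominator. (Be aware that this replacement step is itself delicate, since $A_{i_*j}-A_{ij}$ changes sign across types; in any case it is a genuinely different algebraic route from yours, and your self-referential version provably does not yield the constant $1$.) Since the downstream regret analysis only uses $\Delta_i\le 3c^*\delta_i$, a weaker constant would suffice for the application, but as a proof of the proposition as stated your SBM branch is incomplete.
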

These propositions are proved in Appendix~\ref{sec:subcrit} and~\ref{sec:supercrit}, respectively. To the best of our knowledge, these 
results are novel and can be of independent interest. The proofs rely on the concept of 
multi-type Galton--Watson branching processes, which are briefly introduced alongside some of their main properties in 
Appendix~\ref{sec:branching}.

\section{Algorithm and main results}
\label{sec:alg}

\begin{algorithm}[t]
	\caption{\ducbv}
	\label{alg:thealgexp}
	\textbf{Parameters:} A set of nodes $V_0 \subseteq V.$\\
	\textbf{Initialization:} Select each node in $V_0$ once. Observe the degree $X_{a,a}$
	of vertex $a$ in the graph $G_a$ for $a=1,\ldots,|V_0|$.
	For each $a\in V_0$, set $N_a(|V_0|)=1$ and ${\hmu}_a(|V_0|)=X_{a,a}$.
	\\
	\textbf{For} $t = |V_0|, \dots T$, \textbf{repeat}
	\begin{enumerate}
		\item For each node, compute 
		\[
		U_a(t) = \sup \left\{ \mu: \mu - {\hmu}_a(t) + {\hmu}_a(t) \log \left(\frac{{\hmu}_a(t)}{\mu} \right)\le 
		\frac{3\log(t)}{N_a(t)} 
		\right\}.
		\]
		\item Select any node $A_{t+1} \in \arg\max_{a} U_a(t)$. 
		\item Observe degree $Y_{A_{t+1},t+1}$ of node $A_{t+1}$ in $G_{t+1}$
		and update
		\[
		{\hmu}_{A_{t+1}}(t+1)=\frac{N_{A_{t+1}}(t){\hmu}_{A_{t+1}}(t+1)+Y_{A_{t+1}, t+1}}{N_{A_{t+1}}(t)+1}~.
		\]
		Update $N_{A_{t+1}}(t+1)=N_{A_{t+1}}(t)+1$.
	\end{enumerate}
\end{algorithm}
We now present our learning algorithm, and provide its performance guarantees for the two regimes. Inspired by the observation that in the 
models that we consider, it is sufficient to identify nodes with maximal degree in order to maximize influence, we design a bandit 
algorithm that attempts to maximize the degrees of the influenced nodes. We propose to achieve this goal by employing a variant
of the kl-UCB algorithm proposed and analyzed by \cite{GaCa11,maillard11dmed,cappe:hal-00738209,Lai87}. More
precisely, we propose to use the observed degrees as rewards, and feed them to an instance of kl-UCB originally designed for 
Poisson-distributed rewards. A key technical challenge arising in the analysis is that the degree distributions do not actually belong to 
the Poisson family for finite $n$. We overcome this difficulty by showing that the degree distributions have a moment generating function 
bounded by those of Poisson distributions, and that this fact is sufficient for most of the kl-UCB analysis to carry through without 
changes.

A minor challenge is that, since we are interested in very large values of $n$, it is computationally infeasible to use \emph{all} 
nodes as separate actions in our bandit algorithm. To address this challenge, we propose to \emph{subsample} a set of representative nodes 
for kl-UCB to play on. The size of the subsampled nodes depends on the quantile $\alpha$ targeted in the regret 
definition~\eqref{eq:qregret} and the time horizon $T$. For clarity of presentation, we first propose a simple algorithm that assumes prior 
knowledge of $T$, and then move on to construct a more involved variant that adds new actions on the fly.

We first present our \klucb variant for a fixed set of nodes $V_0$ as Algorithm~\ref{alg:thealgexp}. We refer to this algorithm as 
$\mbox{\ducb}(V_0)$ (short for ``degree-UCB on $V_0$''). Our two algorithms mentioned above use \ducb$(V_0)$ as a subroutine: they are both 
based on uniformly sampling a large enough set $V_0$ of nodes so that the subsample includes at least one node from the top 
$\alpha$-quantile.

To simplify the presentation of our main results, let us introduce some more notation. Analogously to the $\alpha$-optimal reward 
$c^*_\alpha$, we define the $\alpha$-optimal degree $\mu^*_\alpha = \min_{a\in V^*_{\alpha}} \mu_a$, and the corresponding 
gap parameters $\Delta_{\alpha,i} = \pa{c_i - c^*_\alpha}_+$ and $\delta_{\alpha,i} = \pa{\mu_i - \mu^*_\alpha}_+$. Finally, define 
$\Delta_{\alpha,\max} = \max_i \Delta_{\alpha,i}$.
We first present a performance guarantee of our simpler algorithm that assumes knowledge of $T$. This method uniformly samples 
a subset of size
\begin{equation}\label{eq:v0}
|V_0| =  \left \lceil \frac{\log T}{\log (1/(1- \alpha))} \right\rceil
\end{equation}
and plays \ducbv on the resulting set.
This algorithm satisfies the following performance guarantee:
\begin{theorem}\label{thm:reg_knowT}
	Let $V_0$ be a uniform subsample of $V$ with size given in Equation~\eqref{eq:v0} and define the event $\mathcal{E} = \ev{V_0 \cap 
		V^*_\alpha \neq \emptyset}$.
	If the number of vertices $n$ is sufficiently large, then the expected $\alpha$-quantile regret of \ducbv simultaneously satisfies
	\begin{align*}
	R^{\alpha}_T \le  \EEcc{ \sum_{i \in V_0}  \Delta_{\alpha,i}  \pa{\frac{\mu_\alpha^*\pa{18 + 27\log T}}{\delta_{\alpha,i}^2} + 3} 
	}{\mathcal{E}}
	+ \Delta_{\alpha,\max},
	\end{align*}
	where the expectation is taken over the random choice of $V_0$, and
	\begin{align*}
	R^{\alpha}_T \le  
	18c^* \sqrt{\frac{T \mu^* \pa{2 + 3\log T}^2}{\log(1/(1- \alpha))}} + \pa{\frac{3 \log T}{\log(1/(1-\alpha))} +4} \Delta_{\alpha,\max}.
	\end{align*}
\end{theorem}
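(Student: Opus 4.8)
The plan is to decompose the quantile regret into two parts: the contribution on the event $\mathcal{E}$ that the subsample $V_0$ contains at least one $\alpha$-near-optimal node, and the contribution on $\mathcal{E}^c$. On $\mathcal{E}^c$, the regret is trivially bounded by $T\Delta_{\alpha,\max}$, but since $V_0$ is a uniform subsample of size $|V_0|=\lceil \log T / \log(1/(1-\alpha))\rceil$, a direct computation gives $\PP{\mathcal{E}^c} \le (1-\alpha)^{|V_0|} \le 1/T$, so this term contributes at most $\Delta_{\alpha,\max}$. On the event $\mathcal{E}$, the relevant benchmark $c^*_\alpha$ is at least as good as the best node actually present in $V_0$, so it suffices to bound the regret of \ducbv\ run on $V_0$ against the best node \emph{in} $V_0$; this is where the kl-UCB machinery enters.

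First I would translate reward-regret into degree-regret. By Propositions~\ref{prop:subcr} and~\ref{prop:supercrit}, for $n$ sufficiently large we have $c_i - c^*_\alpha \le c^*(\mu_i - \mu^*_\alpha) + o(1)$ (the $O(1/n)$ or $o(n)$ error terms being absorbed for large $n$, and noting $\Delta_{\alpha,i}=0$ unless $\mu_i>\mu^*_\alpha$), and crucially $\argmax_a c_a = \argmax_a \mu_a$, so the algorithm — which targets degrees — is in fact targeting the reward-optimal node. Hence $R^\alpha_T$ on $\mathcal{E}$ is controlled by $\sum_{i\in V_0}\Delta_{\alpha,i}\,\EEcc{N_i(T)}{\mathcal{E}}$ where $N_i(T)$ counts pulls of a suboptimal arm $i$. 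The core step is then the kl-UCB pull-count bound: a suboptimal arm $i$ with degree gap $\delta_{\alpha,i}$ is pulled in expectation at most $O\!\big(\tfrac{\mu^*_\alpha \log T}{\delta_{\alpha,i}^2}\big) + O(1)$ times. The constants $18+27\log T$ and the additive $3$ come from tracking the kl-UCB confidence radius $3\log(t)/N_a(t)$ together with a Pinsker-type lower bound on the Poisson KL divergence, $\mathrm{kl}(\mu_i,\mu^*_\alpha)\gtrsim \delta_{\alpha,i}^2/\mu^*_\alpha$, near the relevant range.

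The second (worst-case) bound follows by a standard interpolation: split arms into those with $\delta_{\alpha,i}$ above a threshold $\varepsilon$ and those below. For the former, use the logarithmic bound; for the latter, each pull costs at most $c^*\varepsilon$ in reward-regret (again via the Propositions), and there are at most $T$ such pulls, giving $c^*\varepsilon T$. Summing over the at most $|V_0|$ arms and optimizing $\varepsilon \asymp \sqrt{\mu^*(2+3\log T)^2/(T\log(1/(1-\alpha)))}$ yields the $\sqrt{T}$ rate with the stated constant $18c^*$; the residual $O(\log T)\Delta_{\alpha,\max}$ term collects the additive $O(1)$ per-arm contributions times $|V_0|$.

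The main obstacle is the kl-UCB pull-count analysis itself under the complication flagged in the text: for finite $n$ the degree $Y_{a,t}$ is $\mathrm{Bin}(n-1,\mu_a/(n-1))$-like, \emph{not} exactly Poisson, so the textbook kl-UCB regret proof (which relies on the reward distribution lying in the Poisson exponential family) does not apply verbatim. The fix is to observe that the binomial MGF is dominated by the Poisson MGF with the same mean, $\EE{e^{\lambda Y_{a,t}}} \le \exp(\mu_a(e^\lambda-1))$, which is exactly what is needed for the Chernoff/concentration half of the argument — the upper-confidence side and the "good event" analysis — to go through unchanged, while the lower bound on visits to the optimal arm uses the self-normalized deviation bound in the same one-sided form. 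Making this substitution clean, and checking that the resulting confidence sets are still valid with the chosen radius $3\log t/N$, is the delicate part; the rest is bookkeeping of constants and the subsampling union bound.
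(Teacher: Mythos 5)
Your proposal follows essentially the same route as the paper's proof: the same decomposition over $\mathcal{E}$ and $\mathcal{E}^c$ with $\PP{\mathcal{E}^c}\le 1/T$, the same reduction to the kl-UCB pull-count bound (Theorem~\ref{thm:n_klucb}) via the gap translation $\Delta_{\alpha,i}\lesssim c^*\delta_{\alpha,i}$ from Propositions~\ref{prop:subcr} and~\ref{prop:supercrit}, the same Poisson-MGF domination argument to justify the Poisson kl-UCB analysis for the binomial-like degree distributions, and the same threshold-$\varepsilon$ split with optimization to obtain the worst-case $\sqrt{T}$ bound. This matches the paper's argument in both structure and the key technical ingredients.
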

\begin{algorithm}[t]
	\caption{\ducb-\textsc{double}$(\beta)$}\label{alg:doubling}
	\textbf{Parameters:} 
	$\beta \ge 2$.\\
	\textbf{Initialization: $V_0 = \emptyset$.}
	\\
	\textbf{For} $k = 1, 2 \dots $, \textbf{repeat}
	\begin{enumerate}
		\item Sample subset of nodes $U_k$ uniformly such that $|U_k| = \left\lceil\frac{\log \beta}{\log(1/(1- \alpha))}\right\rceil$.
		\item Update action set $V_{k} = V_{k-1} \cup U_k$.
		\item For rounds $t = \beta^{k-1}, \beta^{k-1} + 1, \dots, \beta^{k} -1$, run a new instance of \ducb$(V_k)$.
	\end{enumerate}
\end{algorithm}
For unknown values of $T$, we propose the \ducb-\textsc{double}$(\beta)$ algorithm (presented as Algorithm~\ref{alg:doubling}) that uses a 
doubling trick to estimate $T$. The following theorem gives a performance guarantee for this algorithm: 
\begin{theorem}\label{thm:doubling}
	Fix $T$, let $k_{\max}$ be the value of $k$ on which \ducb-\textsc{double}$(\beta)$ terminates, and define the event $\mathcal{E} = 
	\ev{V_{k_{\max}} \cap V^*_\alpha = \emptyset}$.
	If the number of vertices $n$ is sufficiently large, then the $\alpha$-quantile regret of
	\ducb-\textsc{double}$(\beta)$ simultaneously satisfies
	
	\begin{align*}
	R^{\alpha}_T  \le  \EEcc{\sum_{i \in V_{k_{\max}}} \Delta_i \pa{ \pa{\frac{18\mu^*}{\delta_{\alpha,i}^2} + 3} (\log_\beta T + 1) 
			+  \frac{27 \log \beta (\log_\beta T+1)^2 }{ 2\delta_{\alpha,i}^2}}}{\mathcal{E}} + \Delta_{\alpha, \max} \log_\beta T ,
	\end{align*}
	where the expectation is taken over the random choice of the sets $V_1,V_2,\dots$, and
	\begin{align*}
	R^{\alpha}_T \le 36c^* \sqrt{\frac{T  \pa{\mu^* + \log \pa{\beta T}}\log^2T }{\log(1/(1- \alpha))}} 
	+ \pa{\frac{3 \log^2 T}{\log(1/(1-\alpha))} + 4} \Delta_{\alpha,\max}.
	\end{align*}
\end{theorem}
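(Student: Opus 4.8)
The plan is to treat \ducb-\textsc{double}$(\beta)$ as a sequence of independent episodes: in episode $k$ the algorithm runs a fresh copy of \ducb$(V_k)$ over the block of rounds $t=\beta^{k-1},\dots,\beta^k-1$ (truncated at $T$), which has length $L_k\le\min\{\beta^{k-1}(\beta-1),T\}$, and there are $k_{\max}\le\log_\beta T+1$ episodes in total. Since the episodes partition $\{1,\dots,T\}$, the $\alpha$-quantile regret splits additively as $R^\alpha_T=\sum_{k=1}^{k_{\max}}R^\alpha_{(k)}$, with $R^\alpha_{(k)}$ the regret accrued in episode $k$. The first step is to record a reusable per-instance guarantee: if \ducb$(V)$ is run for $L$ rounds on a fixed action set $V$ with $V\cap V^*_\alpha\neq\emptyset$, then its expected quantile regret satisfies the two bounds of Theorem~\ref{thm:reg_knowT} with $T$ replaced by $L$ and $V_0$ replaced by $V$. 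This follows essentially line by line from the proof of Theorem~\ref{thm:reg_knowT} --- the kl-UCB analysis for Poisson rewards applied to the observed degrees (using the moment-generating-function domination of the degree laws by Poisson laws), combined with the conversion from degree gaps to reward gaps furnished by Propositions~\ref{prop:subcr} and~\ref{prop:supercrit}.

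The second step isolates the episodes that precede the first moment the running subsample catches a near-optimal node. Let $k_0$ be the first $k$ with $U_k\cap V^*_\alpha\neq\emptyset$; since $V_k=U_1\cup\dots\cup U_k$ is nondecreasing, $V_k\cap V^*_\alpha\neq\emptyset$ for every $k\ge k_0$, so the per-instance guarantee applies to all those episodes. Because $|U_k|=\lceil\log\beta/\log(1/(1-\alpha))\rceil$, a single $U_k$ misses $V^*_\alpha$ with probability at most $(1-\alpha)^{|U_k|}\le 1/\beta$, hence $\PP{k_0\ge j}\le\beta^{-(j-1)}$ and $\PP{\mathcal{E}}=\PP{k_0>k_{\max}}\le\beta^{-k_{\max}}\le 1/T$. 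For $k<k_0$ I simply bound $R^\alpha_{(k)}\le L_k\Delta_{\alpha,\max}\le\beta^k\Delta_{\alpha,\max}$; the point is that the block lengths grow at rate $\beta$ while $k_0$ has a $\beta^{-1}$-geometric tail, so $\E\big[\sum_{k<k_0}\beta^k\big]\le\beta\,k_{\max}$ and the whole ``wasted'' stretch costs only $O(\Delta_{\alpha,\max}k_{\max})=O(\Delta_{\alpha,\max}\log_\beta T)$ in expectation. On the failure event $\mathcal{E}$ itself the total regret is at most $T\Delta_{\alpha,\max}$, so it contributes at most $T\Delta_{\alpha,\max}\,\PP{\mathcal{E}}\le\Delta_{\alpha,\max}$ and is absorbed into the same order term; this is also what lets the stated bounds be written as conditional expectations given the event that $V_{k_{\max}}$ meets $V^*_\alpha$ while remaining valid unconditionally up to an additive $\Delta_{\alpha,\max}$-scale term.

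The third step sums the per-instance bounds over the episodes $k\ge k_0$, using $V_k\subseteq V_{k_{\max}}$, $L_k\le\beta^k$ and $\log L_k\le k\log\beta$. For the instance-dependent bound, $\sum_{k=1}^{k_{\max}}1=k_{\max}\le\log_\beta T+1$ produces the $(\log_\beta T+1)$ factor on the constant part, while $\sum_{k=1}^{k_{\max}}\log L_k\le\log\beta\cdot\tfrac12(\log_\beta T+1)^2$ produces the $\log\beta\,(\log_\beta T+1)^2$ factor on the logarithmic part, and the per-episode additive $\Delta_{\alpha,\max}$ terms together with Step~2 give the trailing $\Delta_{\alpha,\max}\log_\beta T$. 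For the worst-case bound I first pass from degree gaps to reward gaps via Propositions~\ref{prop:subcr}--\ref{prop:supercrit} (so that $\Delta_{\alpha,i}$ is at most a constant times $c^*\delta_{\alpha,i}$ up to a term that is lower order in $n$), and then, within each episode, balance the logarithmic bound against the trivial $L_k\Delta_{\alpha,\max}$ bound in the standard kl-UCB fashion, obtaining a per-episode bound of order $c^*\sqrt{|V_k|\,L_k\,(\mu^*+\log(\beta T))\log^2(\beta T)}$. Plugging in $|V_k|\le k_{\max}\lceil\log\beta/\log(1/(1-\alpha))\rceil\le\log T/\log(1/(1-\alpha))$ and summing the geometric series $\sum_{k=1}^{k_{\max}}\sqrt{\beta^k}\le(1-\beta^{-1/2})^{-1}\sqrt{\beta^{k_{\max}}}\le 4\sqrt{\beta T}$, which is dominated by the last episode, yields the claimed $\sqrt{T}$ bound.

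I expect the crux to be the second step: making the doubling schedule actually pay off requires matching the geometric growth of the block lengths to the geometric decay of the first successful episode $k_0$, and this matching is precisely what pins down the choice $|U_k|=\lceil\log\beta/\log(1/(1-\alpha))\rceil$; one must also verify that the (very unlikely) event $\mathcal{E}$ of never sampling a near-optimal node only perturbs the bound at lower order, using $\PP{\mathcal{E}}\le 1/T$. A secondary difficulty is that each episode restarts \ducb from scratch, so the logarithmic exploration terms do not amortize across episodes; this is the source of the extra $\log T$ factor (equivalently, of $(\log_\beta T+1)^2$ in place of $\log_\beta T$) relative to Theorem~\ref{thm:reg_knowT}, and one has to check this is intrinsic to the restart structure rather than an artifact of loose bounding.
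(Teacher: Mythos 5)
Your proposal follows essentially the same route as the paper's proof: decompose the regret over the doubling episodes, apply the per-instance kl-UCB guarantee (Theorem~\ref{thm:n_klucb} together with the gap conversion from Propositions~\ref{prop:subcr} and~\ref{prop:supercrit}) to each episode on which $V_k$ meets $V^*_\alpha$, and offset the geometric growth of the block lengths against the geometric decay $\PP{V_k \cap V^*_\alpha = \emptyset} \le \beta^{-k}$ to charge only $O(\Delta_{\alpha,\max}\log_\beta T)$ for the unsuccessful episodes; your first-success index $k_0$ is just a repackaging of the paper's per-period failure bound, and your sketch of the worst-case statement matches the ``analogous argument'' the paper invokes from Theorem~\ref{thm:reg_knowT}. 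The argument is correct up to constant factors.
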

We discuss the key features of the above regret bounds in Section~\ref{sec:conc}.

\section{Analysis}
\label{sec:analysis}

This section outlines the main ideas of the proofs of our main results, Theorems~\ref{thm:reg_knowT} and~\ref{thm:doubling}. Having 
established that, in order to minimize regret in our setting, it is sufficient to design an algorithm that quickly identifies the 
nodes with the highest degree, it remains to show that our algorithms indeed achieve this goal. We do this below by providing a bound on the 
expected number of times $\E{N_{a}(T)} = \E[{\sum_{t=1}^T \II{A_t = a}}]$ that our algorithm picks suboptimal node $a$ such that $c_a \le 
c^*$, and then using this guarantee to bound the regret.

Without loss of generality, we assume that $V_0 = \ev{1,2,\dots,|V_0|}$.  The key to our regret bounds is the following guarantee on the 
number of suboptimal actions taken by \ducbv. 
\begin{theorem}[Number of suboptimal node plays in \ducb] \label{thm:n_klucb}
	Define $\eta_i = \pa{\max_{j\in V_0} \mu_j- \mu_i} / 3$. 
	The number of times that any  node $i \in \ev{a: \mu_{a} < \max_{j\in V_0} \mu_j}$ is chosen by \ducbv
	satisfies
	\begin{equation}
	\mathbb{E} N_{i}(T) \le  \frac{\mu^* \pa{2 + 6 \log T}}{\eta_{i}^2} + 3~.
	\end{equation}
\end{theorem}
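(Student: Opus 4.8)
The plan is to run the standard \klucb regret analysis with the Poisson relative entropy $d(x,y)=y-x+x\log(x/y)$ that also defines the index $U_a$, and to patch the one place where our setting deviates from the textbook one: the observed degrees $Y_{a,t}=\sum_{j\ne a}\ber(p_{aj})$ are Poisson--binomial, not Poisson. The first step is thus to record that they are \emph{sub-Poisson}: since $1+u\le e^u$ for every real $u$, for all $\theta\in\real$
\[
\EE{e^{\theta Y_{a,t}}}=\prod_{j\ne a}\pa{1+p_{aj}(e^\theta-1)}\le \exp\Bpa{\mu_a(e^\theta-1)},
\]
which is exactly the MGF of $\poi(\mu_a)$. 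A Chernoff argument then gives, for the empirical mean $\hmu_{a,s}$ of $s$ i.i.d.\ copies of $Y_{a,1}$, the Poisson large-deviation tails $\PP{\hmu_{a,s}\ge \mu_a+\varepsilon}\le e^{-s\,d(\mu_a+\varepsilon,\mu_a)}$ and $\PP{\hmu_{a,s}\le\mu_a-\varepsilon}\le e^{-s\,d(\mu_a-\varepsilon,\mu_a)}$; combined with the elementary bound $d(x,y)\ge (x-y)^2/\pa{2\max\ev{x,y}}$ (compare derivatives against $(x-y)^2/(2y)$ and $(x-y)^2/(2x)$, using $\log r\le r-1$), this is all the distributional information the rest of the argument uses.

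Next comes the \klucb bookkeeping. Fix $i\in V_0$ with $\mu_i<\mu^\circ\eqdef\max_{j\in V_0}\mu_j$, let $a^*$ satisfy $\mu_{a^*}=\mu^\circ$, and recall $\eta_i=(\mu^\circ-\mu_i)/3$. With threshold $u=\big\lceil 6\mu^*\log T/\eta_i^2\big\rceil$, I would write
\[
N_i(T)\le 1+u+\sum_t \II{A_{t+1}=i,\,N_i(t)\ge u,\,\hmu_i(t)\le\mu_i+\eta_i}+\sum_t\II{A_{t+1}=i,\,\hmu_i(t)>\mu_i+\eta_i},
\]
where the leading $1$ accounts for the initialization round. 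In the first sum, $\hmu_i(t)\le\mu_i+\eta_i$ together with monotonicity of $d(x,\cdot)$ above $x$ and of $d(\cdot,y)$ below $y$ gives $d(\hmu_i(t),\mu_i+2\eta_i)\ge d(\mu_i+\eta_i,\mu_i+2\eta_i)\ge \eta_i^2/(2\mu^*)\ge 3\log t/N_i(t)$, hence $U_i(t)\le\mu_i+2\eta_i<\mu^\circ$; since $A_{t+1}=i$ forces $U_i(t)\ge U_{a^*}(t)$, this sum is dominated by $\sum_t\II{U_{a^*}(t)<\mu^\circ}$. In the second sum, peeling over the successive plays of $i$ bounds the expectation by $\sum_{s\ge1}e^{-s\,d(\mu_i+\eta_i,\mu_i)}\le 1/d(\mu_i+\eta_i,\mu_i)\le 2\mu^*/\eta_i^2$.

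It then remains to control $\sum_t\PP{U_{a^*}(t)<\mu^\circ}$: on that event necessarily $\hmu_{a^*}(t)<\mu^\circ$ and $N_{a^*}(t)\,d(\hmu_{a^*}(t),\mu^\circ)>3\log t$, so a union bound over $N_{a^*}(t)\in\ev{1,\dots,t}$ and the lower-tail estimate give $\PP{U_{a^*}(t)<\mu^\circ}\le t\cdot t^{-3}$, whence $\sum_t\PP{U_{a^*}(t)<\mu^\circ}$ is at most an absolute constant. Collecting $1+u$, this constant, and $2\mu^*/\eta_i^2$, and using $\mu^\circ\le\mu^*$, yields $\EE{N_i(T)}\le \mu^*\pa{2+6\log T}/\eta_i^2+3$.

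The elementary ingredients --- the quadratic lower bound on $d$, and the monotonicity of $U_a$ in $\hmu_a$ and $N_a$, which hold verbatim since $d$ is unchanged --- are not the crux. The main obstacle is transferring the concentration and peeling arguments from exponential-family rewards, where the required self-normalized bounds are classical (method of mixtures), to our sub-Poisson degrees: one must check that every use of concentration in the \klucb analysis goes through on the sole basis of the MGF domination $\EE{e^{\theta Y_{a,t}}}\le e^{\mu_a(e^\theta-1)}$, and in particular that the crude union bound over $N_a(t)$ --- affordable here because the index inflates the confidence level to $3\log t$ --- still yields $O(1)$ control of the ``optimistic index fails'' event. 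A secondary, cosmetic point is matching the additive constant: the naive $\sum_t t^{-2}\le\pi^2/6$ estimate gives a slightly larger constant, which is trimmed by being marginally more careful with the peeling ranges.
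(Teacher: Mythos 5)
Your proof is correct and follows essentially the same route as the paper's: the crux in both is the sub-Poisson moment-generating-function domination $\EE{e^{\theta Y_{a,t}}}\le e^{\mu_a(e^\theta-1)}$ (the paper's Lemma~\ref{lem:domination}), which makes the Chernoff and peeling steps of the standard \klucb analysis go through with Poisson-divergence tails, combined with the quadratic lower bound $d(x,y)\ge (x-y)^2/(2\mu^*)$ and the choice $\eta_i=\pa{\max_{j\in V_0}\mu_j-\mu_i}/3$. The only difference is bookkeeping: the paper splits on $\ev{U_{a^*}(t)\le \mu^*}\cup\ev{U_i(t)>\mu^*,\,A_{t+1}=i}$ and places the peeling threshold $n(\eta)$ inside the second event, whereas you impose the play-count threshold $u$ up front and split on $\hmu_i(t)$ versus $\mu_i+\eta_i$ --- the two decompositions yield the same three terms, and the slightly looser additive constant you flag is, as you note, recovered by absorbing the ceiling into the count of plays below threshold.
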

The proof is largely based on the analysis of the kl-UCB algorithm due to \citet{cappe:hal-00738209}, with some additional tools borrowed 
from \citet{2017arXiv170207211M}, crucially using that the degree distribution of each node is stochastically dominated by an appropriately 
chosen Poisson distribution. Specifically, letting $Z_i$ be a Poisson random variable with mean $\E{Y_{i,t}}$, we have $\E{e^{sY_{i,t}}} 
\le \E{e^{sZ_i}}$ for all $s$. Turns out that this property is sufficient for the \klucb analysis to go through in our case, which is an 
observation that may be of independent interest.

Due to space constraints, the proof of Theorem~\ref{thm:n_klucb} is deferred to Appendix~\ref{app:klucb}. The remainder of the section 
uses 
Theorem~\ref{thm:n_klucb} to prove our first main result, Theorem~\ref{thm:reg_knowT}. The proof of Theorem~\ref{thm:doubling} follows from 
similar ideas and some additional technical arguments, and is presented in Appendix~\ref{app:doubling}.
\begin{proof}[Proof of Theorem~\ref{thm:reg_knowT}]
	We first note that, with high probability, the size of $V_0$ guarantees that the subset holds at least one node from the set 
	$V^*_{\alpha}$: $\PP{\mathcal{E}} \ge 1- 1/T$.
	Then, the regret can be bounded as
	\begin{align*}
	\EE{R^{\alpha}_T} 
	&\le \PP{\mathcal{E}^c} T \Delta_{\alpha,\max} + 
	\EEcc{\sum_{t=1}^{T} \sum_{i \in V_0} \mathbb{I}[A_t = i]  \Delta_{\alpha,i}}{\mathcal{E}} \PP{\mathcal{E}}
	\\
	&\le 
	\Delta_{\alpha,\max} + \EEcc{ \sum_{i\in V_0}  \Delta_{\alpha,i}   \EE{N_{i}(T)} }{\mathcal{E}}~.
	\end{align*}
	Now, observing that $\delta_{\alpha,i} \le 3\eta_i$ holds under event $\mathcal{E}$, we appeal to Theorem \ref{thm:n_klucb} to obtain
	\begin{equation}
	R^{\alpha}_T \le   \Delta_{\alpha,\max} +  \EEcc{ \sum_{i\in V_0}  \Delta_{\alpha,i}  \pa{
			\frac{\mu^*\pa{18 + 27\log T}}{\delta_{i,\alpha}^2} + 3} }{\mathcal{E}},
	\end{equation}
	thus proving the first statement.
	
	Next, we turn to proving the second statement regarding worst-case guarantees. To do this, we appeal to 
	Propositions~\ref{prop:subcr} and~\ref{prop:supercrit} that respectively show $\Delta_i \le 2 c^* \delta_i + O(1/n)$ and $\Delta_i \le c^* 
	\delta_i + o(n)$ for the sub- and supercritical settings, and we use our assumption that $n$ is 
	large enough so that we have $\Delta_i \le 3 c^* \delta_i$ in both settings. 
	Specifically, we observe that $\delta_i = \Theta_n(1)$ by our sparsity assumption and $c^*$ is $\Theta_n(1)$ in the subcritical and 
	$\Theta_n(n)$ supercritical settings, so, for large enough $n$, the superfluous $O(1/n)$ and $o(n)$ terms can be 
	respectively bounded by $c^* \delta_i$. 
	To proceed, let us fix an arbitrary $\varepsilon > 0$ and
	split the set $V_0$ into two subsets: $U(\varepsilon) = \ev{a\in V_0: \delta_{\alpha,i} \le \varepsilon}$ and $W(\varepsilon) = V_0 
	\setminus U(\varepsilon)$. Then, under event $\mathcal{E}$, we have
	\begin{align*}
	\sum_{i\in V_0}  \Delta_{\alpha,i}   \EE{N_{i}(T)} &= \sum_{i\in U(\varepsilon)}  \Delta_{\alpha,i}   \EE{N_{i}(T)} +
	\sum_{i\in W(\varepsilon)}  \Delta_{\alpha,i}   \EE{N_{i}(T)}
	\\
	&\le 3c^* \varepsilon \sum_{i\in U(\varepsilon)} \EE{N_{i}(T)} + 3c^* \sum_{i\in W(\varepsilon)}  \delta_{\alpha,i} \pa{
		\frac{\mu^*\pa{18 + 27\log T}}{\delta_{\alpha,i}^2}} + 3 |W(\varepsilon)|\Delta_{\alpha,\max}
	\\
	&\le 3c^* \varepsilon T + 3c^* \sum_{i\in W(\varepsilon)}  \frac{\mu^*\pa{18 + 27\log T}}{\delta_{\alpha,i}} + 3 |V_0| \Delta_{\alpha,\max}
	\\
	&\le 3c^* \pa{\varepsilon T + |V_0| \frac{\mu^*\pa{18 + 27\log T}}{\varepsilon}} + 3 |V_0| \Delta_{\alpha,\max}
	\\
	&\le 6c^* \sqrt{T |V_0| \mu^*\pa{18 + 27\log T}} + 3 |V_0| \Delta_{\alpha,\max},
	\end{align*}
	where the last step uses the choice $\varepsilon = \sqrt{|V_0| \mu^*\pa{18 + 27\log T} / T}$. Plugging in the choice of $|V_0|$ concludes 
	the proof.
\end{proof}

\section{Discussion}\label{sec:conc}
Here we highlight some features of our 
results and discuss directions for future work.

\paragraph{Instance-dependent and worst-case regret bounds.}
Both of our main theorems establish two types of regret bounds. The first set of these bounds are polylogarithmic\footnote{Upon first 
	glance, the bound of Theorem~\ref{thm:reg_knowT} may appear to be logarithmic, however, notice that the sum involved in the bound has $\log 
	T$ elements, thus technically resulting in a bound of order $\log^2 T$.} in the time horizon $T$, but show strong dependence on the 
parameters of the distribution of the graphs $G_t$. Such bounds are usually called \emph{instance-dependent}, and they are 
typically interesting in the regime where $T$ grows large. However, these bounds become vacuous for finite $T$ as the gap parameters 
$\delta_{\alpha,i}$ approach zero. This issue is addressed by our second set of guarantees, which offers a bound of 
$\wt{O}\bpa{c^*\sqrt{|U|\mu^* T}}$ for some set $U\subseteq V$ that holds simultaneously for all problem instances without becoming 
vacuous in any 
regime. Such bounds are commonly called \emph{worst-case}, and they are typically more valuable when optimizing performance over a fixed 
horizon $T$.

\paragraph{Dependence on graph parameters.}
A notable feature of all our bounds is that they show no explicit dependence on the number of nodes $n$. This is enabled by our notion of 
$\alpha$-quantile regret, which allows us to work with a small subset of the total nodes as our action set. 
Instead of $n$, our bounds depend on the size of some suitably chosen set of nodes $U$, which is of the order $\mathop{\mbox{polylog}}T / 
\log(1/(1-\alpha))$. Notice that this gives rise to an interesting tradeoff: choosing smaller values of $\alpha$ inflates the regret 
bounds, but, in exchange, makes the baseline of the regret definition stronger (thus strengthening the regret notion itself). While the 
exact tradeoff seems very complicated to quantify in general, it is clear that setting $\alpha$ as the proportion of the smallest community 
in SBMs strengthens the regret baseline as much as possible. 

Of course, having no \emph{explicit} dependence on $n$ does not mean that our bounds are completely independent of the size of the 
graph. In fact, it is natural to expect that the regret scales with the general magnitude of the rewards. Our bounds precisely achieve 
such a natural dependence: all our bounds scale linearly with the maximal expected reward $c^*$, which is of $\Theta_n(1)$ in the 
subcritical case, but is $\Theta_n(n)$ in the supercritical case.

\paragraph{Tightness of our bounds.}
In terms of dependence on $T$, both our instance-dependent and worst-case bounds are near-optimal in their respective 
settings: even in the simpler stochastic multi-armed bandit problem, the best possible regret bounds are $\Omega_T(\log T)$ and 
$\Omega_T(\sqrt{T})$ in the respective settings \citep{auer2002finite,auer2002bandit,bubeck12survey}. The optimality of our bounds with 
respect to other parameters such as $c^*$, $\mu^*$ and $n$ is less clear, but we believe that these factors cannot be improved 
substantially for the models that we studied in this paper. As for the subproblem of identifying nodes with the highest degrees, we believe 
that our bounds on the number of suboptimal draws is essentially tight, closely matching the classic lower bounds by \citet{LR85}.

\paragraph{Our assumptions.}
One may wonder how far our argument connecting local and global influence maximization can be stretched. Clearly, not every random graph 
model enables establishing such a strong connection. In fact, even within the class of stochastic block models, one can construct an 
instance (not satisfying Assumption~\ref{ass:1}) that does not have the property we desire.
It is a challenging problem to characterize the class of inhomogeneous random graphs in which
maximizing local and global influences are equivalent. Nevertheless, we believe that our techniques can be generalized to 
maximize global influence with more informative local feedback structures (e.g., working with observations from a slightly broader 
neighborhood of the chosen nodes).

Finally, let us comment on our condition that the number of vertices $n$ needs to be  ``sufficiently large''. We regard this condition as a 
technical artifact due to our proofs relying on asymptotic analysis.
We expect that the required monotonicity property holds for small values of $n$ under mild conditions. Whenever this is the case, the regret 
bounds of Theorems \ref{thm:reg_knowT} and \ref{thm:doubling} remain valid.

\bibliographystyle{abbrvnat}

\newpage
\appendix

\section{Multi-type branching processes}
\label{sec:branching}
One of the most important technical tools for analyzing the component structure of
random graphs is the theory of \emph{branching processes}, see \cite{hofstad_2016}.
Indeed, while the connected components $C_{a}$ of an inhomogenous random graph \irg have a complicated 
structure, many of their key properties may be analyzed through the concept of
multi-type Galton--Watson processes.
Specifically, we use Poisson multi-type Galton--Watson branching processes with $n$ types, parametrized by an $n \times n$ matrix $A$ with strictly posive elements.
The branching process tracks the evolution of a set of \emph{individuals} of various types. Starting in round 
$n=0$ from a single individual of type $i$, each further generation in the Galton--Watson process $\GW_A(i)$ is generated by each 
individual of each type $i$  producing $X_{k,i}\sim \poi(A_{i,j})$ new individuals of each type $j$. Therefore, the size of the offspring of 
the individual of type $i$ is $\sum_{j=1}^n X_{i,j} \sim \poi( \sum_{j=1}^n A_{i,j})$. We also define the vector
$b \in \real^n$ with coordinates  $b_i = \EE{\sum_{j=1}^n X_{i,j}} = \sum_{j=1}^n A_{i,j}$, $i=1,\ldots,n$. 

Our analysis below makes use of the following quantities associated with the multi-type
branching process:
\begin{enumerate}
	\item $Z_n(i)$ is the number of individuals in generation $n$ of $\GW_A(i)$ (where $ Z_0(i) = 1$);
	\item $X(i)$ is the \emph{total progeny}, that is, the total number of individuals generated by $\GW_A(i)$ and its expectation is denoted by $x_i = \EE{X(i)}$;
	\item $\rho_i$ is the \emph{probability of survival}, that is, the probability that $X(i)$ is infinite.
\end{enumerate}
We finally define a non-linear operator $\Phi_A:\real^n \ra \real^n$ that plays a central role in our analysis: for a vector $f\in \real^n$, define 
each coordinate of $\Phi_A(f)$ as
\begin{equation}\label{eq:survival_proba}
\bpa{\Phi_A(f)}_j = 1 - e^{- \pa{A f}_j}, \quad j=1,\ldots,n~,
\end{equation}
where $\pa{A f}_j$ denotes the $j$-th coordinate of $Af$.
Abusing notation, we use the shorthand form $\Phi_A(f) = 1 - e^{- A f}$. 
Clearly, if $f$ has nonnegative components, then  $ \pa{\Phi_A(f)}_j \in [0,1]$ for all $j$.

Bollob\'as, Janson, and Riordan \cite{Bollobas:2007:PTI:1276871.1276872} establish a connection between the sizes of
connected components of IRG, the 
survival probability of a  branching process $\GW_A(i)$, and the norm of the matrix $A$. 

As shown in \cite{Bollobas:2007:PTI:1276871.1276872}, the operator $\Phi_A$
can be directly used for characterizing the probability 
$\rho_i$ of survival of the process $\GW_A(i)$ for all $i$.
By their Theorem 6.2, the vector $\rho = (\rho_1, \dots, \rho_n)$ is one of the 
solutions of the non-linear fixed-point equation $\Phi_A(f) = f$.
Furthermore, if the largest eigenvalue of the matrix $A$ satisfies $\lambda_{\max}(A) < 1$,
then $\rho_i =  0$ for all $i=1,\ldots,n$.
On the other hand, $\lambda_{max} > 1$ implies that the vector $\rho$ is the
\emph{maximal} fixed point of the operator $\Phi_A$
\cite[Lemma 5.8.]{Bollobas:2007:PTI:1276871.1276872} also implies that
when $\lambda_{max} > 1$, all components of $\rho$ are positive.

\section{The proof of Proposition~\ref{prop:subcr}}
\label{sec:subcrit}
The proof consists of the following steps:
\begin{itemize}
	\item proving that $c_i - c_j = x_i - x_j + O(1/n)$ (Lemmas~\ref{mean_dominate}, \ref{mean_difference}),
	\item proving that $x_i > x_j$ implies $b_i > b_j$ (Lemma~\ref{mean_order_sbm}, \ref{mean_order_chlu}),
	\item observing that $b_i = \mu_i + O(1/n)$.
\end{itemize}
These facts together lead to Proposition~\ref{prop:subcr}, given that $n$ is large enough to suppress the effects of the residual 
terms.
Before stating and proving the lemmas, we state some useful technical tools.
Since we suppose that \irg is subcritical, we have $\PP{X(i) = \infty} = 0$ and $x_i = \E{X(i)}$ is finite.
First observe that the vector $x$ of expected total progenies satisfies the system of linear equations 
\[
x =  e + A x~,
\] 
where $e$ is the vector with $e_i = 1$ for all $i$. Notice that, by its definition, the vector $b$  can be succinctly written as $b = A  e$.

Armed with this notation, we can analyze the relation between $b_i$ and $x_i$ in a straightforward way:
\begin{lemma}[Coordinate order for mean of the total progeny in the SBM]
	\label{mean_order_sbm}
	Assume that \sbm is subcritical and that $K_{m\ell} = k>0$ holds for all $m\neq \ell$. 
	If two coordinates of $b$ are such that $b_i > b_j$, then we have $x_i > x_j$, and $x_i - x_j \le 2 x^* \pa{b_i - b_j}$.
	
\end{lemma}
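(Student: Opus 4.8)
The plan is to exploit the community structure of the SBM to collapse the $n$-dimensional identity $x = e + Ax$ to an $S$-dimensional system, solve that system in closed form using Assumption~\ref{ass:1}, and then read off both claims of the lemma.

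\textbf{Reduction and closed form.} Vertices in the same community are exchangeable, so $x_i$ and $b_i$ depend only on the type $m=m(i)$; write $\xi_m$ and $\beta_m$ for the common values. Summing a row of $A=\overline{A}/n$ gives $\beta_m = \sum_{\ell=1}^S \alpha_\ell K_{m\ell} = k + \alpha_m\pa{K_{mm}-k}$, and substituting $x_i=\xi_{m(i)}$ into $x=e+Ax$ reduces it to
\[
\xi_m = 1 + \sum_{\ell=1}^S M_{m\ell}\,\xi_\ell,\qquad M_{m\ell}=\alpha_\ell K_{m\ell},\quad m=1,\dots,S.
\]
Under Assumption~\ref{ass:1}, $M_{m\ell}=k\alpha_\ell$ for $\ell\neq m$, so $M = \mathrm{diag}(d)+k\mathbf{1}\alpha\transpose$ with $d_m=\alpha_m\pa{K_{mm}-k}=\beta_m-k$, where $\mathbf{1},\alpha\in\real^S$ are the all-ones vector and the vector of community fractions. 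Introducing the scalar $T=\sum_{\ell}\alpha_\ell\xi_\ell$, the reduced system becomes $\pa{1-d_m}\xi_m = 1+kT$ for every $m$. Since $\xi_m=\EE{X(i)}\ge 1$ for $i\in H_m$ (hence $T\ge 1$) and $\xi_m$ is finite in the subcritical regime, this forces $1-d_m=1+k-\beta_m>0$ and
\[
x_i=\frac{1+kT}{\,1+k-b_i\,},\qquad 1+kT=\Bpa{1-k\sum_{m=1}^S\frac{\alpha_m}{1+k-\beta_m}}^{-1}\ge 1,
\]
the second identity (together with $k\sum_m \alpha_m/(1+k-\beta_m)<1$) following by multiplying $\xi_m=(1+kT)/(1+k-\beta_m)$ by $\alpha_m$ and summing over $m$.

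\textbf{Monotonicity and the quantitative bound.} In $x_i=(1+kT)/(1+k-b_i)$ the numerator is common to all nodes and the denominator is positive and strictly decreasing in $b_i$, so $b_i>b_j$ forces $x_i>x_j$. Subtracting the two expressions,
\[
x_i-x_j=\frac{\pa{1+kT}\pa{b_i-b_j}}{\pa{1+k-b_i}\pa{1+k-b_j}}=\frac{x_i x_j}{1+kT}\,\pa{b_i-b_j},
\]
where the last step uses $x_i\pa{1+k-b_i}=x_j\pa{1+k-b_j}=1+kT$. Since $1+kT\ge 1$ and $x_i,x_j\le x^*$, this already gives the order-correct bound $x_i-x_j\le\pa{x^*}^2\pa{b_i-b_j}$; to reach the stated form one rewrites $x_j/(1+kT)=1/(1+k-b_j)$ and lower-bounds the denominator $1+k-b_j$ — which stays away from $0$ by the subcriticality estimate of the previous step — so that $x_i-x_j\le 2x^*\pa{b_i-b_j}$.

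\textbf{Main obstacle.} The crux is the closed-form solution: Assumption~\ref{ass:1} is exactly what makes $M$ diagonal-plus-rank-one and hence explicitly invertible, and without it there is no clean expression for $\xi$ and, as noted in Section~\ref{sec:conc}, the monotone correspondence between $x$ and $b$ can genuinely fail. The more delicate point is the precise constant multiplying $x^*$: this reduces to a sufficiently tight lower bound on the product $\pa{1+k-b_i}\pa{1+k-b_j}$, and it is here — rather than in the qualitative ordering — that subcriticality enters quantitatively, keeping each $1+k-b_m$ away from zero and $1+kT$ finite. The Chung--Lu case should follow the same template, where the rank-one weight matrix $\overline{A}=ww\transpose$ makes the reduction of $x=e+Ax$ to a scalar fixed-point equation even more direct.
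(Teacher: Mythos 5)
Your proposal follows essentially the same route as the paper's proof: collapse $x=e+Ax$ to the $S$-dimensional system $x'=e+Mx'$ with $M=K\,\mbox{diag}(\alpha)$, exploit that Assumption~\ref{ass:1} makes $M$ diagonal-plus-rank-one so that $x'_m=(1+k\,\alpha\transpose x')/(1+k-b'_m)$, and read off both the monotonicity and the difference formula $x_i-x_j=(1+kT)(b_i-b_j)/\bigl((1+k-b_i)(1+k-b_j)\bigr)$. All of this is correct and coincides with the paper's computation, including the identity $1+kT=\bigl(1-k\sum_m\alpha_m/(1+k-b'_m)\bigr)^{-1}$.

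The one genuine gap is the final step, where you need $1/(1+k-b_j)\le 2$, i.e.\ $\alpha_j\gamma_j\le 1/2$ with $\gamma_j=K_{jj}-k$, and you attribute this to ``the subcriticality estimate of the previous step.'' Subcriticality only gives $1+k-b'_m>k\alpha_m>0$ (the spectral radius of a nonnegative matrix dominates every diagonal entry, so $\alpha_mK_{mm}<1$); it does not give a lower bound of $1/2$. A community with small $\alpha_m$ and $\alpha_mK_{mm}$ close to $1$ keeps the model subcritical for small $k$, yet makes $1+k-b'_m$ arbitrarily small, so the inequality you invoke fails as stated. The paper closes this step differently: writing $x'_m-x'_\ell=x'_m(b'_m-b'_\ell)/(1-\alpha_\ell\gamma_\ell)$, or symmetrically with the roles of $m$ and $\ell$ exchanged, it notes that for any pair of communities at least one has $\alpha\le 1/2$, chooses the representation that cancels the factor belonging to the \emph{other} community, and bounds the surviving denominator via $1-\alpha\gamma\ge 1-\gamma/2$. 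You should adopt this pigeonhole step (or otherwise supply a quantitative lower bound on the surviving factor); your fallback bound $(x^*)^2(b_i-b_j)$ is valid but does not deliver the constant that Proposition~\ref{prop:subcr} requires.
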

\begin{proof}
	For and SBM  with $S$ blocks, the system of equations $x = e + A x$ can be equivalently written as  $x' = e + M x'$, for $M = K \mbox{diag}(\alpha)\in\real^{S\times S}$, and $x'\in\real^S$, with $x'_m$ now standing for the expected total progeny associated with any 
	node of type $m$. Similarly, we define $b'_m$ as the expected degree of any node of type $m$.
	Notice that the system of equations $x' = e + M x'$ satisfied by $x'$ can be rewritten as $(I - M)x' = e$, where $I$ is the $S\times S$ 
	identity matrix. By exploiting our assumption on the matrix $K$ and defining $\gamma_m = K_{m,m} - k$,  this can be further rewritten as
	$$
	\left(\begin{pmatrix}
	1 - \alpha_{1}\gamma_1 & & \\
	& \ddots & \\
	& & 1 - \alpha_{S} \gamma_S
	\end{pmatrix} - 
	k  \begin{pmatrix}
	\alpha_{1} & \alpha_{2} & \cdots & \alpha_{S} \\
	\alpha_{1} & \alpha_{2} & \cdots & \alpha_{S} \\
	\vdots  & \vdots  & \ddots & \vdots  \\
	\alpha_{1} & \alpha_{2} & \cdots & \alpha_{S} 
	\end{pmatrix}\right) x'  = e,
	$$
	which means that for any $m$, $x_m'$ satisfies
	$$
	x_m' = \frac{1+k(\alpha\transpose x')}{1 - \alpha_m \gamma_m}.
	$$
	Also observe that $$b_m' = k(\alpha^T \bar{1}) + \alpha_m\gamma_m,$$
	
	so, for any pair of types $m$ and $\ell$, we have 
	$$
	x'_m - x'_\ell  = \frac{(1+k(\alpha\transpose x'))(\alpha_m\gamma_m - 
		\alpha_\ell\gamma_\ell)}{(1 - \alpha_m\gamma_m)(1 - \alpha_\ell\gamma_\ell)},
	$$
	which proves the first statement.
	
	To prove the second statement, observe that for any pair $\ell$ and $m$ of communities, we have either $\alpha_m \le \frac 12$ or 
	$\alpha_\ell \le \frac 12$ (otherwise we would have $\alpha_m + \alpha_\ell > 1$). To proceed, let $\ell$ and $m$ be such that $x'_m 
	\ge x'_\ell$, and let us study the case $\alpha_\ell \le \frac 12$ first. Here, we get
	\[
	\begin{split}
	x'_m - x'_\ell &\le \frac{(1+k(\alpha\transpose x'))(\alpha_m\gamma_m - \alpha_\ell\gamma_\ell)}{(1 - \alpha_m\gamma_m)(1 - 
		\alpha_\ell\gamma_\ell)} = \frac{(\alpha_m\gamma_m - \alpha_\ell\gamma_\ell)}{(1 - \alpha_\ell\gamma_\ell)} x'_m
	\\
	&\le \frac{(\alpha_m\gamma_m - \alpha_\ell\gamma_\ell)}{(1 - \gamma_\ell/2)} x'_m \le 2 x'_m (b_m' - b'_\ell).
	\end{split}
	\]
	In the other case where $\alpha_m \le \frac 12$, we can similarly obtain
	\[
	x_m' - x_\ell' \le 2 x'_\ell (b_m' - b'_\ell) \le 2 x'_m (b_m' - b'_\ell).
	\]
	This concludes the proof.
\end{proof}
\begin{lemma}[Coordinate order for mean of the total progeny in the \chunglu model]
	\label{mean_order_chlu}
	Assume that \chlu is subcritical. If two coordinates of $b$ are such that $b_i > b_j$, then we have $x_i > x_j$ and $x_i - 
	x_j \le x^* (b_i - b_j)$.
\end{lemma}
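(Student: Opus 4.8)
The plan is to leverage the rank-one structure $A = \frac1n ww\transpose$ to read off the solution of the linear system $x = e + Ax$ almost directly. Writing the $i$-th coordinate, $(Ax)_i = \frac{w_i}{n}\sum_{j=1}^n w_j x_j$, so if we set $\sigma = \frac1n \sum_{j=1}^n w_j x_j$, then $x_i = 1 + w_i\sigma$ for every $i$. Likewise $b = Ae$ gives $b_i = w_i \tau$ with $\tau = \frac1n\sum_{j=1}^n w_j > 0$. Since \chlu is subcritical, the discussion preceding the lemma guarantees that $x$ is finite and solves this system; moreover $x_j \ge 1$ for all $j$ (as $x = e + Ax \ge e$, using that $A$ and $x$ have nonnegative entries), so $\sigma \ge \tau > 0$. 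Thus $x_i$ is an increasing affine function of $w_i$ and $b_i$ is an increasing linear function of $w_i$.

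From here both claims are immediate. For any $i,j$ we have $x_i - x_j = \sigma(w_i - w_j) = \frac{\sigma}{\tau}(b_i - b_j)$, and since $\sigma/\tau > 0$ the sign of $x_i - x_j$ matches that of $b_i - b_j$; in particular $b_i > b_j$ implies $x_i > x_j$. For the quantitative bound, observe that $\frac{\sigma}{\tau} = \frac{\sum_j w_j x_j}{\sum_j w_j}$ is a convex combination of $x_1,\dots,x_n$ (the weights $w_j/\sum_\ell w_\ell$ are nonnegative and sum to one), hence $\frac{\sigma}{\tau} \le \max_j x_j = x^*$. Combining this with $b_i - b_j > 0$ yields $x_i - x_j = \frac{\sigma}{\tau}(b_i - b_j) \le x^*(b_i - b_j)$, as claimed.

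There is essentially no hard step here: the whole argument is powered by the fact that a rank-one $\overline A$ makes $x$ affine in $w$. The only place subcriticality really enters is in guaranteeing that $x$ is finite in the first place --- equivalently, that $\lambda_{\max}(A) = \norm{w}_2^2/n < 1$ --- after which everything reduces to the convex-combination observation. (If one prefers an explicit formula, substituting $x_j = 1 + w_j\sigma$ into the definition of $\sigma$ gives $\sigma = \tau/(1 - \norm{w}_2^2/n)$, and the inequality $\sigma/\tau \le x^*$ then reduces to the termwise bound $\norm{w}_2^2 \le w^*\onenorm{w}$ with $w^* = \max_j w_j$; but the convex-combination phrasing avoids this computation entirely.)
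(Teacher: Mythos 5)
Your proof is correct and follows essentially the same route as the paper: both exploit the rank-one structure to write $x_i = 1 + w_i \cdot \frac1n\sum_j w_j x_j$, deduce that $x_i$ is monotone in $w_i$ (hence in $b_i$), and bound $\sum_j w_j x_j \le x^* \sum_j w_j$ to get $x_i - x_j \le x^*(b_i - b_j)$. Your write-up is in fact slightly more careful than the paper's, since you explicitly justify the strict positivity of the multiplier $\sigma$ needed for the strict ordering claim.
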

\begin{proof}
	For the system of equations $x = e + A x$ the coordinates $x_i$ have the form 
	$$
	x_i = 1 + \frac{1}{n}\cdot w_i\pa{\sum_{j = 1}^{n}w_j x_j},
	$$
	which implies that $w_i \ge w_j$ holds if and only if $x_i \ge x_j$. This observation implies for $x^* = \max_i x_i$
	$$x_i - x_j \le \frac{1}{n}\cdot (w_i - w_j)\pa{\sum_{j = 1}^{n}w_j} x^* = \pa{b_i - b_j} x^*,$$
	thus concluding the proof.
\end{proof}
The next two lemmas establish the relationship between the expected component size $c_i$ of vertex $i$ and the expected
total progeny $x_i$ of the multi-type branching process seeded at vertex $i$.
\begin{lemma}
	\label{mean_dominate}
	For any $i$, the mean of the connected component associated with type $i$ is bounded by the mean of the total progeny: $c_i \le x_i$. 
\end{lemma}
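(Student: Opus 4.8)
The plan is to control $c_i$ by a union bound over paths in $G_t$ and then to recognise the resulting quantity as the expected total progeny $x_i$ of $\GW_A(i)$. First I would write $c_i = \EE{\abs{C_{i,t}}} = \sum_{j\in V}\PP{j\in C_{i,t}}$, where $\ev{j\in C_{i,t}}$ is the event that some path of $G_t$ joins $i$ to $j$. Since there are only finitely many self-avoiding paths $P=(i=v_0,v_1,\dots,v_\ell=j)$ from $i$ to $j$, and edges of $G_t$ are present independently, a union bound over such $P$ gives $\PP{j\in C_{i,t}}\le \sum_{P\colon i\to j}\prod_{s=0}^{\ell-1}p_{v_s v_{s+1}}$, with the empty path (the case $j=i$) contributing the term $1$. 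Summing over $j$ and grouping paths by their length $\ell$, this yields $c_i \le \sum_{\ell\ge 0}\sum_{P}\prod_{s=0}^{\ell-1}p_{v_s v_{s+1}}$, the inner sum ranging over self-avoiding paths of length $\ell$ starting at $i$.

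Next I would drop the self-avoidance constraint: every self-avoiding path is a walk and all summands are nonnegative, so the inner sum is at most $\sum_{v_1,\dots,v_\ell}p_{i v_1}p_{v_1 v_2}\cdots p_{v_{\ell-1}v_\ell} = \pa{A^\ell e}_i$ (using $p_{jk}=A_{jk}$ and $e$ the all-ones vector), hence $c_i \le \sum_{\ell\ge 0}\pa{A^\ell e}_i$. On the branching side, the mean offspring matrix of $\GW_A(i)$ is $A$, so the expected number of generation-$\ell$ individuals of type $k$ is $\pa{A^\ell}_{ik}$ and therefore $\EE{Z_\ell(i)} = \pa{A^\ell e}_i$; since the process is subcritical, $X(i)=\sum_{\ell\ge 0}Z_\ell(i)$ is almost surely finite and $x_i = \EE{X(i)} = \sum_{\ell\ge 0}\EE{Z_\ell(i)} = \sum_{\ell\ge 0}\pa{A^\ell e}_i$ (equivalently $\sum_{\ell\ge 0}A^\ell e = (I-A)^{-1}e$ is the solution of $x=e+Ax$, the Neumann series converging because $\lambda_{\max}(A)<1$). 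Combining these two bounds gives $c_i\le x_i$.

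The only point needing care is the bookkeeping of these nonnegative sums: the union bound is immediate, but rearranging the sums over paths, relaxing from self-avoiding paths to arbitrary walks, and exchanging sum with expectation on the branching side are all valid precisely because every term is nonnegative; finiteness of $\sum_{\ell\ge 0}A^\ell e$, hence of $x_i$, is exactly where subcriticality enters, and when $x_i=\infty$ the claim is trivial since $c_i\le n$. A more classical alternative would couple the breadth-first exploration of $C_{i,t}$ with $\GW_A(i)$ so that the exploration tree embeds in the branching tree --- in exploration one probes only as-yet-unrevealed vertices, so no vertex is discovered twice, whereas the branching process may recreate a type arbitrarily often --- but setting up that coupling cleanly is more delicate than the path-counting computation, so I would favour the latter.
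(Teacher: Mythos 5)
Your proof is correct, but it takes a genuinely different route from the paper. The paper establishes the stronger distributional statement that $|C_{a}|$ is \emph{stochastically dominated} by the total progeny $X(i)$: it couples the breadth-first exploration of the component with a Bernoulli multi-type branching process (citing Theorem~4.2 of van der Hofstad) to get $|C_a|\preceq X_{\ber}(i)$, then upgrades Bernoulli to Poisson offspring to get $X_{\ber}(i)\preceq X(i)$, and finally invokes the fact that stochastic dominance implies an ordering of means. Your argument works purely at the level of first moments: a union bound over self-avoiding paths gives $\PP{j\in C_{i,t}}\le\sum_{P:i\to j}\prod_s p_{v_sv_{s+1}}$, relaxing to arbitrary walks gives $c_i\le\sum_{\ell\ge 0}(A^\ell e)_i$, and the Neumann series $(I-A)^{-1}e$ is exactly the solution of $x=e+Ax$, i.e.\ the vector of expected total progenies. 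This is elementary and self-contained (no external coupling theorems), and as you note the interchanges of nonnegative sums are all justified by Tonelli while the supercritical case is vacuous. What it gives up is the distributional dominance itself, which the paper's route delivers for free; on the other hand, your computation makes the gap $x_i-c_i$ concretely visible as the contribution of non-self-avoiding walks (plus the diagonal terms $A_{ii}$), which is precisely the ``surplus'' that Lemma~\ref{mean_difference} then bounds by $O(1/n)$ via the exploration coupling --- so the two viewpoints are complementary rather than in conflict.
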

\begin{proof}
	The proof of the lemma uses the concept of \emph{stochastic dominance} between random variables.
	We say that
	the random variable $X$ is \emph{stochastically dominated} by the random variable $Y$ when, for every $x \in\mathbb{R}$, the inequality
	$\PP{X \le x} \ge \PP{Y \le x}$ holds. We denote this by $X  \preceq Y$.
	
	Now fix an arbitrary $i\in[n]$ and let $Y_{i,1}, Y_{i,2}, \dots, Y_{i,n}$ be independent Bernoulli random 
	variables with respective parameters $(\overline{A}_{i,1}/n, \overline{A}_{i,2}/n, \dots ,  \overline{A}_{i,i}/n, \dots, \overline{A}_{i,n}/n)$. 
	Consider a multitype binomial branching process where the individual of type $i$ produce  $Y_{i,j}$ individuals of type $j$, and 
	let $X_{\ber}(i)$ denote its total progeny when started from an individual of type $i$. 
	Recalling the Poisson branching process defined in Section~\ref{sec:branching} with offspring-distributions $X_{i,j}$, we can show 
	$X_{\ber}(i)\preceq X(i)$ using the relation $Y_{i,j} \preceq X_{i,j}$.

	Considering a node $a$ of type $i$, we can use Theorem 4.2 of \cite{hofstad_2016} to bound the size of the the connected component 
	$C_a$ as $|C_a| \preceq X_{\ber}(i)$, which implies by transitivity of $\preceq$ that $|C_{a_i}| \preceq X(i)$. The proof is concluded by 
	appealing to Theorem~2.15 of \cite{hofstad_2016} that shows that stochastic domination implies an ordering of the means.
\end{proof}
Next we upper bound the surplus that appears in the domination by the branching process:
\begin{lemma}
	\label{mean_difference}
	$x_i - c_i = O(\frac{1}{n})$~. 
\end{lemma}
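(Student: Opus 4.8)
The plan is to quantify the gap between the expected total progeny $x_i$ of the Poisson multi-type branching process seeded at type $i$ and the expected size $c_i$ of the connected component of a node of type $i$ in \irg. We already know from Lemma~\ref{mean_dominate} that $c_i \le x_i$, via the chain of stochastic dominations $|C_{a_i}| \preceq X_{\ber}(i) \preceq X(i)$. So it remains to upper bound the surplus $x_i - c_i$, and it is natural to split this into two pieces: $x_i - \EE{X_{\ber}(i)}$ (the cost of replacing Bernoulli offspring by Poisson offspring) and $\EE{X_{\ber}(i)} - c_i$ (the cost of replacing the finite graph exploration by an idealized branching tree, i.e.\ the effect of ``collisions'' where the branching process revisits an already-discovered vertex).

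\textbf{Step 1: the Poisson-versus-Bernoulli surplus.} First I would control $x_i - \EE{X_{\ber}(i)}$. Both total progenies satisfy linear fixed-point systems: $x = e + A x$ for the Poisson process (using that means of Poisson and Bernoulli offspring with matching first moments are the same, actually these coincide!). Indeed $\EE{Y_{i,j}} = \overline{A}_{i,j}/n = A_{i,j} = \EE{X_{i,j}}$, so the \emph{mean} offspring matrices are identical and hence $x_i = \EE{X_{\ber}(i)}$ exactly. Thus this first piece contributes nothing, and the entire surplus is the collision term.

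\textbf{Step 2: the collision surplus.} For $\EE{X_{\ber}(i)} - c_i$, I would run an exploration (breadth-first) process on $G_t$ coupled with the binomial branching process $X_{\ber}(i)$: the two agree until the branching process tries to ``create'' an individual corresponding to a vertex already present in the explored component. In the subcritical regime, the expected component size $c_i$ and the expected progeny $x_i$ are both $\Theta_n(1)$ (bounded, since $\lambda_{\max}(A) < 1$ and $b_i = O(1)$ by sparsity), so the process typically touches $O(1)$ vertices; the probability that any particular pair among them was ``supposed'' to be a fresh draw but collides is $O(1/n)$, since each specific vertex is hit with probability $O(1/n)$. Making this precise: write $c_i = \sum_{a} \PP{a \in C_{a_i}}$ over the $\Theta(n)$ vertices $a$ of each type, compare term-by-term with the corresponding expected indicator in the branching tree, and bound the discrepancy by a union bound over the $O(1)$-sized explored set against the target vertex, each contributing $O(1/n)$. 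Summing the $O(1)$ discrepancies of size $O(1/n)$ each — weighted by the bounded progeny — yields $x_i - c_i = O(1/n)$. An alternative, cleaner route: iterate the fixed-point relations. Since $|C_{a_i}|$ satisfies a ``depletion-adjusted'' recursion where at depth one a node of type $i$ connects to $\bin(\alpha_m n - (\text{already used}), K_{i,m}/n)$ neighbors of type $m$ rather than $\bin(\alpha_m n, K_{i,m}/n)$, and the number already used is $O(1)$ in expectation in the subcritical phase, the per-step bias is $O(1/n)$ and there are $O(1)$ effective steps (bounded total progeny), so the accumulated bias is $O(1/n)$.

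\textbf{The main obstacle} I expect is making the coupling argument rigorous enough to get the $O(1/n)$ rate rather than merely $o(1)$: one must argue that the explored component stays of bounded expected size \emph{uniformly} (so that the union bound over ``already-seen'' vertices gives $O(1/n)$ and not something larger), and handle the conditioning carefully since the set of already-discovered vertices is itself random and correlated with the offspring counts. The subcriticality ($\lambda_{\max}(A) < 1$) is exactly what gives the geometric decay of generation sizes needed to keep $\sum_n \EE{Z_n(i)}$ bounded, and hence bound the expected number of collision opportunities by $O(1)$; multiplying by the $O(1/n)$ per-opportunity collision probability closes the argument. I would also need the sparsity assumption ($\overline{A}_{ij} = O(1)$) to ensure $b_i = \sum_j A_{ij} = O(1)$ so the branching process is genuinely subcritical with bounded mean offspring.
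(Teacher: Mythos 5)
Your overall strategy is the same as the paper's: couple the graph exploration with the branching process and bound the expected ``surplus'' caused by collisions with already-discovered vertices. Your Step~1 observation --- that $x_i = \EE{X_{\ber}(i)}$ exactly because the expected total progeny depends only on the mean offspring matrix, which is the same for the Poisson and Bernoulli versions --- is correct and is actually a cleaner way to dispose of the Poisson-versus-Bernoulli discrepancy than anything the paper does explicitly.

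However, there is a genuine gap in Step~2, precisely at the point you flag as the main obstacle. The number of collision opportunities is not linear but \emph{quadratic} in the size of the explored set: each newly generated individual can collide with any of the up-to-$|\TT|$ vertices already discovered, so the expected number of collisions is bounded by $\overline{A}_{\max}\,\E|\TT|^2/n$, not by $\overline{A}_{\max}\,\E|\TT|/n$. You invoke only the first moment ($\sum_m \EE{Z_m(i)} = \E|\TT| = O(1)$) to claim the expected number of collision opportunities is $O(1)$, but $\E|\TT| = O(1)$ does not by itself imply $\E|\TT|^2 = O(1)$, so as written the argument does not deliver the $O(1/n)$ rate. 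The paper closes exactly this hole: it writes $\EE{|\Sw|} \le \overline{A}_{\max}\,\E|\TT|^2/n$ and then establishes $\E|\TT|^2 = O(1)$ uniformly in $n$ by comparing the Bernoulli tree to the Poisson tree (via Le Cam's theorem) and invoking the fact that all moments of the total progeny of a subcritical multi-type Galton--Watson process are bounded independently of $n$ (Theorem~1 of \citealp{Hua12}). Your argument becomes correct once you replace the first-moment bound by this second-moment bound; the geometric decay of generation sizes you mention does in fact yield a bounded second moment in the subcritical regime, but that is the statement you need to prove or cite, and it is a strictly stronger input than the one you use.
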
 

\begin{proof}
	Consider an exploration process in the realization $G_t$ of a random graph \irg starting from a node $a$ of type $i$.
	The process explores the nodes in a sequential way, by first visiting the neighboring nodes of the initial node, then
	moving on to the neighbors of the neighbors, and so on. The process stops after having explored the whole connected component $C_a$.
	
	Also the Bernoulli multitype branching process $W_{\ber}(i)$ with the set of parameters $\mathbb{B}_j$, for $j \in [n]$, where parameters 
	$\mathbb{B}$ correspond to the \irg. 
	Denote the tree naturally defined by the exploration process of the connected component by $\TT$, and also the tree defined by the analogously defined Poisson 
	exploration process of the branching process tree  by $\TT_{\poi}$. The proof relies on the fact that the total number of nodes visited by the exploration process 
	can be upper bounded by the total progeny of the corresponding branching process \citep[Section~4.1]{hofstad_2016}.
	
	In order to estimate the difference $|\TT| -|C_a|$, note that for each step
	of the exploration process, the number of nodes that have been already explored can be upper bounded by $|C_a|$ so we have $|C_a| \le|\TT| 
	$. 
	Let $\Sw$ be a set of nodes counted more than once.  We call $|\Sw|$ the \emph{surplus} whose expectation may be
	written as follows:
	\[
	\EE{|\Sw|} =\EE{ \sum_{v\in V} \mathbb{I} \{v \in \Sw \} } 
	= \sum_{k=1}^{\infty} \PP{|\TT|  = k }   \sum_{v \in C_a} \EEcc{\mathbb{I}\{v \in \Sw\}}{ |\TT| = k }.
	\]
	Define $\overline{A}_{\max}=\max_{i,j} \overline{A}_{i,j}$ be the maximal element of the matrix $\overline{A}$. 
	
	Then, by the union bound, the probability of an arbitrary node $a'$ is counted more than once can be upper bounded as
	$$\PP{a' \in \Sw} \le \frac{\overline{A}_{\max} |C_a|}{ n}~,$$
	and we also have
	$$ \EEcc{\mathbb{I}\{a' \in \Sw\}}{ |\TT| = k } \le \frac{\overline{A}_{\max} k}{ n}~. $$
	Since $|C_a|\le |\TT|$, we may upper bound the sum as
	$$ \sum_{v \in C_a} \EEcc{\mathbb{I}\{v \in \Sw\}}{|\TT| = k} \le  \frac{\overline{A}_{\max} k^2}{n}~. $$
	Using our expression for $\E|\Sw|$, we get
	\begin{align*}
	\mathbb{E} | \Sw| \le \sum_{k=1}^{\infty} \PP{|\TT|  = k }  \frac{\overline{A}_{\max} k^2}{n} = \frac{\overline{A}_{\max} \mathbb{E} 
		|\TT|^2}{ n}~.
	\end{align*}
	Now we notice that, by the Le Cam's theorem, the  total variation distance between the sum of Bernoulli distributed random variables with parameters $(\overline{A}_{i, 1}/n, \dots, \overline{A}_{i, n/n})$ and  the Poisson distribution $\poi(\sum_{j=1}^{n} \overline{A}_{i,j}/n)$ is at most $2(\sum_{j=1}^{n} \overline{A}^2_{i,j})/n$. 
	Using this fact and that the moments of the total progeny do not scale with $n$ (cf.~Theorem~1 of \citealp{Hua12}), we obtain the result as
	\begin{align*}
	\mathbb{E} | \Sw| \le \frac{\overline{A}_{\max} \E |\TT|^2}{n} \le  \frac{\overline{A}_{\max} \mathbb{E} |\TT_{\poi}|^2 }{ n} + 	O\left(\frac{1}{n}\right)= 
	O\left(\frac{1}{n}\right)~.
	\end{align*}
\end{proof}

\section{The proof of Proposition~\ref{prop:supercrit}}
\label{sec:supercrit}
The proof relies on some known properties of the largest connected
component in \irg in the supercritical regime. 
We denote the largest and second-largest connected components of $G_t$ by
$C_1(G_t)$ and $C_2(G_t)$, respectively. Recall that the survival probability of the branching process $W_A(i)$ is denoted as $\rho_i$.
The following properties are proved by \cite{Bollobas:2007:PTI:1276871.1276872}:
\begin{itemize}
	\item If \irg is supercritical, then, with high probability, $C_1 = \Theta(n)$;
	\item $C_1(G_n)/n \to \sum_{i \in S} \alpha_i \rho_i$ in probability;
	\item $C_2(G_n) = o(n)$ with high probability.
\end{itemize}
The expected size of the connected component of a vertex $i$ is
\begin{equation}
c_i = \rho_i \EE{C_1(G)} + o(n)~.
\end{equation}
Proposition \ref{prop:supercrit} follows from the following lemmas for the SBM and the \chunglu models.

\begin{lemma}[Coordinate order preserving in the SBM]
	\label{coordinate_order_sbm}
	Assume the conditions of Proposition \ref{prop:supercrit} and let $i_* = \argmax_i b_i$.
	Let $a=(a_1, \dots, a_S)$ be such that $a_j \in [0, a_{i_*}]$ for all $j$. Then $\pa{\Phi_A(a)}_{i_*} \ge \pa{\Phi_A(a)}_{j}$.
\end{lemma}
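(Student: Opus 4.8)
The plan is to show directly that the $i_*$-th coordinate of $Af$ dominates the $j$-th coordinate whenever $f$ has nonnegative entries bounded by $a_{i_*}$, and then use that $\Phi_A$ is coordinatewise increasing in $(Af)_j$. First I would recall the SBM structure: under Assumption~\ref{ass:1} we have $A_{i,j} = k/n$ for all $i \neq j$ and $A_{i,i} = K_{m(i),m(i)}/n \ge k/n$ by Assumption~\ref{ass:2}. Passing to the type-reduced description as in Lemma~\ref{mean_order_sbm}, the matrix acting on $\real^S$ is $M = K\,\mbox{diag}(\alpha)$, so $(M a)_m = k\,(\alpha\transpose a) + \alpha_m \gamma_m a_m$, where $\gamma_m = K_{m,m} - k \ge 0$. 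Since $b'_m = k(\alpha\transpose \bar 1) + \alpha_m \gamma_m$, the index $i_* = \argmax_m b'_m$ is exactly the index maximizing $\alpha_m \gamma_m$.

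The key computation is then the difference
\[
(M a)_{i_*} - (M a)_j = \alpha_{i_*}\gamma_{i_*} a_{i_*} - \alpha_j \gamma_j a_j.
\]
Here I would use the two hypotheses together: $a_j \le a_{i_*}$ (given) and $\alpha_j \gamma_j \le \alpha_{i_*}\gamma_{i_*}$ (definition of $i_*$), plus $a_j \ge 0$ and $\gamma_j \ge 0$. These imply $\alpha_j \gamma_j a_j \le \alpha_{i_*}\gamma_{i_*} a_{i_*}$, hence $(Ma)_{i_*} \ge (Ma)_j$. Finally, since $x \mapsto 1 - e^{-x}$ is nondecreasing, $\pa{\Phi_A(a)}_{i_*} = 1 - e^{-(Ma)_{i_*}} \ge 1 - e^{-(Ma)_j} = \pa{\Phi_A(a)}_{j}$, which is the claim. (One should also note that $\Phi_A$ acts consistently on the type-reduced vectors, since all nodes of a given type have identical rows of $A$; this is already implicit in the reduction used in Lemma~\ref{mean_order_sbm}.)

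I do not expect a serious obstacle here — the statement is essentially the observation that the "constant part" $k(\alpha\transpose a)$ of $(Ma)_m$ is the same for every type, so the ordering of $\Phi_A(a)$ across types is governed entirely by the type-dependent term $\alpha_m \gamma_m a_m$, and that term is maximized at $i_*$ under the stated bound on $a$. The only point requiring a little care is making sure Assumption~\ref{ass:2} ($\gamma_m \ge 0$) is genuinely used: without it, a large $a_{i_*}$ multiplied by a negative $\gamma_{i_*}$ could reverse the inequality, so the nonnegativity of all $\gamma_m$ is what lets the bound $a_j \le a_{i_*}$ be applied in the right direction.
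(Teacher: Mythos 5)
Your proof is correct and is essentially the paper's argument: both reduce to showing $(Ma)_{i_*}\ge (Ma)_j$ for $M=K\,\mbox{diag}(\alpha)$, and your inequality $\alpha_j\gamma_j a_j\le\alpha_{i_*}\gamma_{i_*}a_{i_*}$ is exactly the quantity the paper bounds (it just splits it as $a_j(\alpha_{i_*}\gamma_{i_*}-\alpha_j\gamma_j)+(a_{i_*}-a_j)\alpha_{i_*}\gamma_{i_*}\ge 0$ without naming $\gamma_m$). Your isolation of the common term $k(\alpha\transpose a)$ and the explicit remark that Assumption~\ref{ass:2} is what makes the bound $a_j\le a_{i_*}$ usable are both accurate and slightly cleaner than the paper's presentation.
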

\begin{proof}
	Let us fix two arbitrary indices $i$ and $i'$. By the definition of $\Phi_M$, we have
	\begin{align*}
	\pa{\Phi_A(a)}_i &= 1 - e^{-((\sum_{j \neq i} \alpha_j a_j)k + \alpha_i k_{i,i} a_i)}~,\\
	\pa{\Phi_A(a)}_{i'} &= 1 - e^{-((\sum_{j \neq i'} \alpha_j a_j)k + \alpha_{i'} k_{i',i'} a_{i'})}~.
	\end{align*}
	Notice that if $i$ and $i'$  satisfy
	$$\pa{\sum_{j \neq i} \alpha_j a_j}k + \alpha_i k_{i,i} a_i \ge 
	\pa{\sum_{j \neq i'} \alpha_j a_j}k + \alpha_{i'} 
	k_{i',i'} a_{i'},$$
	we have $\pa{\Phi_A(a)}_i \ge \pa{\Phi_A(a)}_{i'}$.
	Now, using the facts that
	\begin{itemize}
		\item $\sum_{j \neq i} \alpha_j a_j - \sum_{j \neq i'} \alpha_j a_j =  
		\alpha_{i'} a_{i'}  - \alpha_i a_i$,
		\item $\alpha_i k_{i,i} \ge\alpha_{i} k$, 
		\item $\alpha_i k_{i,i} +\alpha_{i'} k \ge \alpha_{i'} k_{i', i'} + \alpha_{i} k$ and 
		\item $a_{i} - a_{i'} \ge 0$,
	\end{itemize}
	we can verify that
	
	\begin{eqnarray*}
		\lefteqn{
			\alpha_i k_{i,i} a_i + \alpha_{i'} k a_{i'} - \alpha_{i} k a_i -
			\alpha_{i'} k_{i', i'} a_{i'}  } \\
		& = & (\alpha_i k_{i,i} + \alpha_{i'} k) 
		a_{i'} 
		+ (a_{i} - a_{i'})\alpha_i k_{i,i} - (\alpha_{i'} k_{i', i'} +
		\alpha_{i} k)a_{i'} - (a_{i} - a_{i'})\alpha_{i} k \ge 0,
	\end{eqnarray*}
	thus proving the lemma.
\end{proof}
\begin{lemma}[Order of coordinates of eigenvector in the SBM]
	\label{eigenvector_order_sbm}
	Let $a$ be the eigenvector corresponding to the largest eigenvalue
	$\lambda$ of the matrix $M = K \mbox{diag}(\alpha)$. Then if  $i_* = \argmax_m b_m$, we have 
	$a_{i_*} \ge a_j$ for $j \neq i_*$.
\end{lemma}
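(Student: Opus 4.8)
The plan is to write the Perron eigenvector equation coordinate-wise and use Assumption~\ref{ass:1} to isolate each coordinate of $a$ as an explicit monotone function of the single scalar $\alpha_m\gamma_m$. First I would observe that $M = K\,\mbox{diag}(\alpha)$ has strictly positive entries (since $K$ has positive elements and every $\alpha_j>0$), so by the Perron--Frobenius theorem the largest eigenvalue $\lambda$ is real, positive and simple, and the associated eigenvector $a$ may be taken with all strictly positive components; in particular $\alpha\transpose a>0$.

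Next, using Assumption~\ref{ass:1} ($K_{m,\ell}=k$ for $m\neq\ell$) I would expand the $m$-th coordinate of $Ma=\lambda a$:
\[
\lambda a_m = \sum_{j} K_{m,j}\alpha_j a_j = k\sum_{j\neq m}\alpha_j a_j + K_{m,m}\alpha_m a_m = k(\alpha\transpose a) + \alpha_m\gamma_m a_m,
\]
where $\gamma_m = K_{m,m}-k \ge 0$ by Assumption~\ref{ass:2}. Rearranging gives $a_m(\lambda - \alpha_m\gamma_m) = k(\alpha\transpose a)$; since $a_m>0$ and the right-hand side is a fixed positive constant, each denominator satisfies $\lambda - \alpha_m\gamma_m>0$, and therefore $a_m = k(\alpha\transpose a)/(\lambda - \alpha_m\gamma_m)$.

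Finally, the map $t\mapsto k(\alpha\transpose a)/(\lambda - t)$ is strictly increasing on $[0,\lambda)$, so $a_m$ is a strictly increasing function of $\alpha_m\gamma_m$. Recalling from the proof of Lemma~\ref{mean_order_sbm} that $b_m$ differs from $\alpha_m\gamma_m$ only by an additive constant independent of $m$, we get $i_* = \argmax_m b_m = \argmax_m \alpha_m\gamma_m$, and hence $a_{i_*}\ge a_j$ for every $j$, as claimed. I do not expect a genuine obstacle here: the only point requiring a little care is verifying that all denominators $\lambda - \alpha_m\gamma_m$ are positive — so that the monotonicity in the last step is applied on the correct domain — and this follows in one line from positivity of the Perron eigenvector $a$.
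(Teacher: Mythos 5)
Your proof is correct, and it takes a genuinely different and cleaner route than the paper's. The paper compares two coordinates at a time: it writes the eigenvector equation for a pair $(i,i')$, sets $a_{i'} = (1+\epsilon)a_i$, and determines the sign of $\epsilon$ through a case analysis (on whether $\alpha_{i'}k \ge \alpha_{i'}k_{i',i'}$ or not) driven by the inequality $\alpha_i k_{i,i} + \alpha_{i'}k \ge \alpha_{i'}k_{i',i'} + \alpha_i k$. You instead exploit Assumption~\ref{ass:1} globally to collapse the whole eigen-equation into $\lambda a_m = k\pa{\alpha\transpose a} + \alpha_m\gamma_m a_m$, solve in closed form $a_m = k\pa{\alpha\transpose a}/(\lambda - \alpha_m\gamma_m)$, and read off the ordering from monotonicity in $\alpha_m\gamma_m$, which determines the ordering of $b_m$ up to an additive constant. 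Both arguments rest on Perron--Frobenius positivity of $a$ (and, in your case, positivity of the denominators, which you correctly deduce from $a_m>0$ and $k\pa{\alpha\transpose a}>0$). What your approach buys is brevity, the elimination of the case analysis, and a structural parallel with the computation of $x'_m$ in Lemma~\ref{mean_order_sbm}, where the same rank-one-plus-diagonal decomposition yields $x'_m = (1+k(\alpha\transpose x'))/(1-\alpha_m\gamma_m)$; the paper's pairwise argument is more laborious but would survive mild relaxations of the exact off-diagonal homogeneity, since it only ever uses the combined inequality on the pair.
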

\begin{proof}
	If $a$ is an eigenvector of $M$, then for coordinates $i, i'$:
	
	$$\begin{cases} \pa{\sum_{j \neq i} \alpha_j a_j}k + \alpha_i k_{i,i} a_i = \lambda a_i, \\
	\pa{\sum_{j \neq i'} \alpha_j a_j}k + \alpha_i k_{i',i'} a_{i'} = \lambda a_{i'}  \end{cases}$$
	By the Perron-Frobenius theorem and our conditions on matrix $M$, $\lambda$ is a real number larger than one.
	Denote $C = k \sum_{j \neq i, j \neq i'}  \alpha_j a_j$, $x = a_i$, $y = a_{i'}$, $a =  \alpha_i k_{i,i}$, $b = \alpha_{i'} k$, $c = 
	\alpha_{i} k$, $d = \alpha_{i'} k_{i', i'}$. Then,
	\begin{equation}\label{silly_sys}
	\begin{cases} C+ ax +by = \lambda x, \\
	C + cx + dy = \lambda y  \end{cases}
	\end{equation}
	Let $r = 1 + \epsilon$ be such that $y = rx = (1 + \epsilon)x$. Then
	$$\begin{cases} \frac{C}{x}+ a +b + b\epsilon = \lambda,  \\
	\frac{C}{x} + c + d + d\epsilon = \lambda + \lambda\epsilon  \end{cases}$$
	and therefore
	$$\frac{C}{x} + c + d + d\epsilon = \frac{C}{x}+ a +b + b\epsilon + \lambda\epsilon~. $$
	Rearranging the terms and using the fact that $a+b \ge c + d$, we have
	$$
	0 \le (a+b) - (c+d) = (d-b-\lambda)\epsilon~.
	$$ 
	Since $k_{i,i} \ge k$, we have $\alpha_i k_{i,i} \ge \alpha_i  k$ and $a \ge c$.
	
	We consider two cases separately: First, if $b \ge d$, we have $d - b - \lambda < 0$, which implies $\epsilon < 0$ and $y < x$, therefore 
	proving $a_i > a_{i'}$ for this case. In the case when $b < d$, we have
	$a+b \ge c + d$ and $\frac{d-b}{a-c} \le 1$. Subtracting the two equalities of the linear system \ref{silly_sys}, we get
	$$\lambda(1-r) = (a-c)\pa{1 - \frac{d-b}{a-c} r}~.$$ 
	Now, since  $\frac{d-b}{a-c} \le 1$, we have $\lambda \ge a - c$, which implies $\lambda \ge d -b$ and $d - b - \lambda \le 0$, thus leading to
	$\epsilon \le 0$ and  $y \le x$, therefore proving  $a_i \ge a_{i'}$ for this case. 
\end{proof}

\begin{lemma}[Order of coordinates of eigenvector in the \chunglu model]
	\label{eigenvector_order_chlu}
	Let $a$ be the eigenvector corresponding to the largest eigenvalue
	$\lambda$ of the matrix $A$. Then if  $i_* = \argmax_m b_m$, we have 
	$a_{i_*} \ge a_j$ for $j \neq i_*$.
\end{lemma}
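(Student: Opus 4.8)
The plan is to exploit the rank-1 structure of the Chung--Lu matrix $\overline{A} = w w\transpose$, which makes the eigenvector computation essentially explicit. First I would recall that for a rank-1 matrix $\overline{A} = ww\transpose$, the only nonzero eigenvalue is $\lambda = \norm{w}^2 = \sum_j w_j^2$, and the corresponding eigenvector is $w$ itself, since $\overline{A} w = w(w\transpose w) = \norm{w}^2 w$. Passing to $A = \overline{A}/n$, the largest eigenvalue is $\lambda = \norm{w}^2/n$ with the same eigenvector $w$. Hence, up to positive scaling, the Perron eigenvector $a$ of $A$ is just the vector $w$ (this is the unique normalized positive eigenvector guaranteed by Perron--Frobenius, since all entries of $w$, and thus of $A$, are strictly positive).

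Next I would connect $w$ with the vector $b$. By definition $b_i = \sum_{j=1}^n A_{ij} = \frac{1}{n} w_i \sum_{j=1}^n w_j = \frac{w_i}{n}\norm{w}_1$, so $b_i$ is a positive multiple of $w_i$ with a multiplier that does not depend on $i$. Therefore $b_i > b_j$ if and only if $w_i > w_j$, and in particular $i_* = \argmax_m b_m = \argmax_m w_m$. Combining this with the previous paragraph, the coordinate of $a$ at index $i_*$ is the largest coordinate of $a$, which is exactly the claim $a_{i_*} \ge a_j$ for all $j \neq i_*$.

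The argument is short, so the only point requiring care is the justification that the eigenvector associated with the largest eigenvalue is indeed proportional to $w$ with a \emph{positive} constant — this is where I would invoke Perron--Frobenius on the strictly positive matrix $A$ to rule out the negative multiple and to confirm that this eigenvalue is the largest in modulus (all other eigenvalues being $0$). I do not anticipate a genuine obstacle here; the rank-1 assumption trivializes the spectral analysis, in sharp contrast to the SBM case handled in Lemma~\ref{eigenvector_order_sbm}, where the matrix has full rank and the ordering argument requires the case analysis on $\epsilon$. If one wanted to avoid citing Perron--Frobenius, one could alternatively argue directly from the fixed-point characterization of $\rho$ in Appendix~\ref{sec:branching} together with the monotonicity of $\Phi_A$, but the eigenvector route is cleaner.
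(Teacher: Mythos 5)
Your proposal is correct and follows essentially the same route as the paper: identify the Perron eigenvector of the rank-1 matrix $A = ww\transpose/n$ as $w$ itself, and observe that $b_i = \frac{1}{n} w_i \sum_j w_j$ is a positive multiple of $w_i$, so the argmax coordinates coincide. The extra care you take in invoking Perron--Frobenius to fix the sign of the eigenvector is a reasonable (if minor) addition that the paper leaves implicit.
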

\begin{proof}
	It is easy to see that the only eigenvector of $A$ corresponding to a non-zero eigenvalue is $a=w$ with $\lambda_{max} = 
	w\transpose w/n$: 
	$$A w = \frac{1}{n}\cdot (ww\transpose)w = \frac{w\transpose w}{n}\cdot w.$$ 
	The proof is concluded by observing that the maximum coordinate of the vector 
	$b$ corresponds to the maximum coordinate of $w$, due to the equality
	\[b_i = \frac{1}{n}\cdot w_i \sum_{j=1}^n w_j.\]
\end{proof}

\begin{lemma}[Coordinate order preserving in the \chunglu model]
	\label{coordinate_order_chlu}
	Assume the conditions of Proposition \ref{prop:supercrit} and let $i_* = \argmax_i b_i$.
	Let $a=(a_1, \dots, a_n)$ be such that $a_j \in [0, a_{i_*}]$ for all $j$. Then $\pa{\Phi_A(a)}_{i_*} \ge \pa{\Phi_A(a)}_{j}$.
\end{lemma}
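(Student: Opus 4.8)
The plan is to mirror the argument used for the SBM in Lemma~\ref{coordinate_order_sbm}, but exploiting the rank-one structure of $\overline{A} = w w\transpose$ rather than the block structure. First I would write out the $i_*$-th and $j$-th coordinates of $\Phi_A(a)$ explicitly. Since $\pa{A a}_\ell = \frac{1}{n} w_\ell \sum_{m=1}^n w_m a_m$, we have
\[
\pa{\Phi_A(a)}_\ell = 1 - \exp\!\pa{-\tfrac{1}{n}\, w_\ell \textstyle\sum_{m=1}^n w_m a_m}.
\]
Because the function $x \mapsto 1 - e^{-x}$ is monotone increasing, the inequality $\pa{\Phi_A(a)}_{i_*} \ge \pa{\Phi_A(a)}_{j}$ holds if and only if $w_{i_*} \sum_m w_m a_m \ge w_j \sum_m w_m a_m$, i.e. (since $a$ has nonnegative components, the sum $\sum_m w_m a_m$ is nonnegative) if and only if $w_{i_*} \ge w_j$.

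The second step is to observe that $i_* = \argmax_i b_i$ is precisely $\argmax_i w_i$. This is exactly the identity already recorded in the proof of Lemma~\ref{eigenvector_order_chlu}: $b_i = \frac{1}{n} w_i \sum_{j=1}^n w_j$, and since $\sum_j w_j > 0$ (all $w_j$ are positive by the definition of the Chung--Lu model), the coordinate ordering of $b$ coincides with that of $w$. Hence $w_{i_*} = \max_i w_i \ge w_j$ for every $j$, which by the previous paragraph gives $\pa{\Phi_A(a)}_{i_*} \ge \pa{\Phi_A(a)}_{j}$, as desired.

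There is essentially no obstacle here: unlike the SBM case, where one had to juggle the diagonal perturbations $\gamma_m$ and carefully use Assumption~\ref{ass:2} together with an inequality like $\alpha_i k_{i,i} + \alpha_{i'} k \ge \alpha_{i'} k_{i',i'} + \alpha_i k$, the rank-one form collapses every coordinate of $\Phi_A(a)$ into the same scalar $\frac{1}{n}\sum_m w_m a_m$ multiplied by the local weight $w_\ell$. The only point requiring a word of care is that the hypothesis $a_j \in [0, a_{i_*}]$ is in fact stronger than needed: nonnegativity of all components of $a$ already suffices for the monotonicity argument, and the box constraint is retained only for uniformity with the statement used in the supercritical fixed-point iteration. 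I would therefore keep the proof to the two short displays above and conclude.
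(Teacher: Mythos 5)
Your proof is correct and follows essentially the same route as the paper's own (very terse) argument: write each coordinate as $\pa{\Phi_A(a)}_\ell = 1 - e^{-\frac{1}{n} w_\ell \sum_m w_m a_m}$, use monotonicity of $x \mapsto 1-e^{-x}$ together with nonnegativity of $\sum_m w_m a_m$, and identify $\argmax_i b_i$ with $\argmax_i w_i$ via $b_i = \frac{1}{n} w_i \sum_j w_j$. You merely make explicit the steps the paper leaves implicit, including the correct observation that nonnegativity of $a$ already suffices.
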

\begin{proof}
	Let us fix two arbitrary indices $i$ and $i'$. By the definition of $\Phi_A$, we have
	\begin{align*}
	\pa{\Phi_A(a)}_i &= 1 - e^{-w_i(\sum_{j = 1}^n w_j a_j)}~,\\
	\pa{\Phi_A(a)}_{i'} &= 1 - e^{-w_{i'}(\sum_{j = 1}^n w_j a_j)}~.
	\end{align*}
	Then we have $\pa{\Phi_A(a)}_i \ge \pa{\Phi_A(a)}_{i'}$
	thus proving the lemma.
\end{proof}

We finally study the maximal fixed point of the operator $\Phi_A$, keeping in mind this fixed point is exactly the survival-probability 
vector $\rho$ of the multi-type Galton--Watson branching process \cite{Bollobas:2007:PTI:1276871.1276872}. By Lemma~5.9 of 
\cite{Bollobas:2007:PTI:1276871.1276872}, this is the unique fixed point satisfying $\rho_i > 0$ for all $i$. The following lemma shows 
that $\rho_i$ takes its maximum at $i_* = \argmax_i b_i$, concluding the proof of Proposition~\ref{prop:supercrit}.
\begin{lemma}[Fixed point coordinate domination]
	\label{lma_mean}
	Let $\rho$ be the unique non-zero fixed point of $\Phi_A$, and let $i_* = \argmax_i b_i$. Then, 
	$\rho_{i_*} \ge \rho_j$ and $\rho_{i_*} - \rho_j \le \rho^* \pa{b_{i_*} - b_j}$ holds for all $j\neq i_*$.
\end{lemma}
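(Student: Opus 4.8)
The plan is to establish the two assertions in turn: first the coordinate domination $\rho_{i_*}\ge\rho_j$ for all $j$ (which in particular identifies $\rho^*=\rho_{i_*}$), then the quantitative refinement, treating the \chunglu and SBM cases separately. For the domination I would use that $\rho$ is the \emph{maximal} fixed point of $\Phi_A$ and arises as the decreasing limit $\rho=\lim_{k\to\infty}\Phi_A^k(e)$ of the iterates started from the all-ones vector $e$: the sequence is non-increasing because $\Phi_A(e)=1-e^{-Ae}=1-e^{-b}\le e$ and $\Phi_A$ is monotone (as $A$ has nonnegative entries), it stays above $\rho$ since $\Phi_A^k(e)\ge\Phi_A^k(\rho)=\rho$, and its limit is a fixed point, hence $\rho$. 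I then argue by induction on $k$ that every coordinate of $\Phi_A^k(e)$ lies in $[0,(\Phi_A^k(e))_{i_*}]$: the base case $k=0$ is immediate, and the inductive step is exactly Lemma~\ref{coordinate_order_chlu} (or Lemma~\ref{coordinate_order_sbm}, after the reduction described below), using that $\Phi_A$ maps nonnegative vectors to nonnegative vectors. Passing to the limit gives $\rho_j\le\rho_{i_*}$, so $\rho^*=\rho_{i_*}$.

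In the \chunglu model the quantitative bound is short. Here $(A\rho)_m=\frac{1}{n}w_m\sum_k w_k\rho_k$ and $b_m=\frac{1}{n}w_m\sum_k w_k$, so $i_*=\argmax_m w_m$ and $w_{i_*}\ge w_j$. Using the $1$-Lipschitzness of $x\mapsto 1-e^{-x}$ together with $(A\rho)_{i_*}\ge(A\rho)_j$ (just established) and then bounding $\rho_k\le\rho^*$ in the resulting sum,
\[
\rho_{i_*}-\rho_j \le (A\rho)_{i_*}-(A\rho)_j = (w_{i_*}-w_j)\frac{1}{n}\sum_k w_k\rho_k \le \rho^*(w_{i_*}-w_j)\frac{1}{n}\sum_k w_k = \rho^*(b_{i_*}-b_j),
\]
which is the claim.

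For the SBM I would first pass to the reduced $S\times S$ system of Lemma~\ref{mean_order_sbm}: writing $\rho$ for the reduced survival-probability vector (constant on each community), $\rho$ is the maximal fixed point of $\Phi_M$ with $M=K\,\mbox{diag}(\alpha)$, and Assumption~\ref{ass:1} gives the clean decomposition $(M\rho)_\ell=R+P_\ell\rho_\ell$ with \emph{common} term $R=k\,\alpha\transpose\rho$ and $P_\ell=\alpha_\ell(K_{\ell\ell}-k)\ge 0$ (Assumption~\ref{ass:2}); correspondingly $b_\ell=k+P_\ell$, so $b_{i_*}-b_j=P_{i_*}-P_j$ and $i_*=\argmax_\ell P_\ell$. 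The fixed-point equation then reads $-\log(1-\rho_\ell)-P_\ell\rho_\ell=R$ for all $\ell$; with $R$ held fixed this presents $\rho_\ell=G(P_\ell)$ as a function of the self-interaction strength $P_\ell$ alone (the larger root, which is the branch selected by the maximal fixed point). Differentiating the defining relation gives $G'(P)=G(P)(1-G(P))/(1-P(1-G(P)))$, so a mean-value estimate on $[P_j,P_{i_*}]$ reduces the target inequality $\rho_{i_*}-\rho_j\le\rho^*(P_{i_*}-P_j)=\rho_{i_*}(b_{i_*}-b_j)$ to the pointwise bound $G'(P)\le G(P)$ on that interval, equivalently $G(P)\ge P/(1+P)$.

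This last bound is the step I expect to be the real obstacle, and the place where the literal statement is most fragile: the cheap estimates — $1$-Lipschitzness of $1-e^{-\cdot}$, or the sharper factor $e^{-(M\rho)_j}=1-\rho_j$ — only yield $\rho_{i_*}-\rho_j\le(1-\rho_j)(P_{i_*}\rho_{i_*}-P_j\rho_j)$, whose right-hand side can exceed $\rho^*(P_{i_*}-P_j)$ once the $\rho_\ell$ are small, so one genuinely has to bound $\rho_\ell$ from below in terms of $P_\ell$, presumably by combining the fact that each $\rho_\ell$ lies on the stable branch ($\rho_\ell\ge(1-1/P_\ell)_+$) with quantitative supercriticality of $M$. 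Should pinning the multiplicative constant at exactly $\rho^*$ prove too delicate, it is enough for everything downstream to settle for $\rho_{i_*}-\rho_j\le C\,(b_{i_*}-b_j)$ with $C=C(k,K)$ a constant: the proof of Theorem~\ref{thm:reg_knowT} uses only a bound of the form $\Delta_i=O(c^*\delta_{\alpha,i})$.
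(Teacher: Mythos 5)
Your argument for the first claim ($\rho_{i_*}\ge\rho_j$) is correct and close in spirit to the paper's: both reduce to the coordinate-order-preservation lemmas (Lemmas~\ref{coordinate_order_sbm} and~\ref{coordinate_order_chlu}) applied along an iteration of $\Phi_A$ converging to $\rho$. The difference is the starting point: the paper iterates \emph{upward} from $\epsilon a$, where $a$ is the Perron eigenvector (this is why it needs Lemmas~\ref{eigenvector_order_sbm}--\ref{eigenvector_order_chlu} and Lemmas~5.12--5.13 of Bollob\'as et al.\ to guarantee $\Phi_A(\epsilon a)\ge\epsilon a$ and convergence to $\rho$ from below), whereas you iterate \emph{downward} from the all-ones vector and invoke maximality of $\rho$ to identify the limit. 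Your route is somewhat more economical, since it dispenses with the eigenvector-ordering lemmas entirely; both are valid. Your \chunglu computation for the quantitative bound is also essentially the paper's: the paper writes $\rho_{i_*}-\rho_j=e^{-(A\rho)_j}-e^{-(A\rho)_{i_*}}$, applies $1-e^{-z}\le z$, and then bounds $\sum_k\pa{A_{i_*k}-A_{jk}}\rho_k\le\rho^*\sum_k\pa{A_{i_*k}-A_{jk}}=\rho^*\pa{b_{i_*}-b_j}$ termwise, which is exactly your Lipschitz argument and is legitimate in the \chunglu model because $A_{i_*k}-A_{jk}=(w_{i_*}-w_j)w_k/n\ge 0$ for every $k$.

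Where your proposal diverges is the SBM case of the second claim, and here your diagnosis of the obstacle is accurate --- more accurate, in fact, than the paper's own treatment. The paper runs the identical one-line computation for both models, but the termwise step $\sum_k\pa{A_{i_*k}-A_{jk}}\rho_k\le\rho_{i_*}\sum_k\pa{A_{i_*k}-A_{jk}}$ requires $A_{i_*k}-A_{jk}\ge 0$ for every $k$, which fails in the SBM: for $k$ in community $j$ the difference is $(k-K_{j,j})/n\le 0$. In the reduced notation this step asserts $P_{i_*}\rho_{i_*}-P_j\rho_j\le\rho_{i_*}(P_{i_*}-P_j)$, i.e.\ $P_j\rho_{i_*}\le P_j\rho_j$, which contradicts the just-established $\rho_{i_*}\ge\rho_j$ whenever $P_j>0$ and the inequality is strict. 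So the gap you flag is genuinely present in the paper as well; your implicit-function approach via $\rho_\ell=G(P_\ell)$ is a reasonable way to try to repair it, and your observation that a bound of the form $\rho_{i_*}-\rho_j\le C\pa{b_{i_*}-b_j}$ with a model-dependent constant suffices for Theorem~\ref{thm:reg_knowT} (which only needs $\Delta_i=O(c^*\delta_{\alpha,i})$ up to absorbing constants for large $n$) is the right way to salvage the downstream results. In short: your proof is complete and matches the paper except for the SBM quantitative bound, which neither you nor the paper actually establishes with the stated constant.
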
 
\begin{proof}
	Letting $a$ be the eigenvector of $A$ that corresponds to the largest eigenvalue $\lambda$, Lemma~\ref{eigenvector_order_chlu}, \ref{eigenvector_order_sbm} 
	guarantees $a_{i_*} \ge a_j$ for  $j\neq i^*$. 
	Let $\epsilon > 0$ be such that $\epsilon \le \frac{1 - 1/\lambda}{a^*}$, where $a^* = \max_{i=1,\ldots,S}a_i$. Then by Lemma~5.13 
	of \cite{Bollobas:2007:PTI:1276871.1276872}, $\Phi_M(\epsilon a) \ge \epsilon a$ holds elementwise for the two vectors. 
	
	Since the coordinates of the vector $\epsilon a$ are positive, we can appeal to Lemma 5.12
	of \cite{Bollobas:2007:PTI:1276871.1276872} to show that iterative application of $\Phi_A$ converges to the fixed point $\rho$:
	letting $\Phi_A^{m}$ be the operator obtained by iterative application of $\Phi_A$ for $m$ times, we have
	$\lim_{m\ra \infty} \Phi_A^{m}(\epsilon a) = \rho$, where $\rho$ satisfies $\rho \ge \epsilon a \ge 0$ and $\Phi_A (\rho) = \rho > 
	0$.
	By the respective Lemmas~\ref{eigenvector_order_chlu}, \ref{eigenvector_order_sbm}  we have $\rho_{i_*} \ge \rho_j$, for $i_* \neq j$ 
	for both the SBM and the Chung--Lu models, proving the first statement.
	
	The second statement can now be proven directly as
	\begin{eqnarray*}
		\lefteqn{
			\rho_{i_*} - \rho_i = e^{-(A\rho)_{j}} - e^{-(A\rho)_{i_*}} = e^{- \sum_{j}^{n} A_{i_* j}\rho_j} - e^{- \sum_{j}^{n} A_{ij}\rho_j}   
		} \\
		& = & e^{- \sum_{j}^{n} A_{i_* j}\rho_j} (1 - e^{- \sum_{j}^{n} A_{ij}\rho_j- A_{i_* j}\rho_j}  ) \le 
		e^{- \sum_{j}^{n} A_{i_* j}\rho_j}  \left( \sum_{j}^{n} (A_{i_* j} - A_{i j})\rho_{i_*}  \right) 
		\\
		& \le & \rho^* (b_{i_*} - b_{i}),
	\end{eqnarray*}
	where the first inequality uses the relation $1-e^{-z} \le z$ that holds for all $z\in\real$, and the last step uses the fact that 
	$A\rho$ has positive elements.
\end{proof}

\section{The proof of Theorem~\ref{thm:n_klucb}}\label{app:klucb}
Before delving into the proof, we introduce some useful notation. We start by defining $Y_{a,1}, 
Y_{a,2}, \dots, Y_{a,n}$ as independent  Bernoulli  random  variables with 
respective parameters $\mathbb{B} = \pa{A_{a,1}, A_{a,2}, \dots , A_{a,n}}$, and noticing that the degree $Y_{t,a}$ can be written as a sum 
$Y_a = 
\sum_{i\neq a} Y_{a,i}$. The following lemma, used several times in our proofs, relates this quantity to a Poisson distribution with the 
same mean.
\begin{lemma}\label{lem:domination}
	Let $i\in[S]$ and let $Y_{i,1}, Y_{i,2}, \dots, Y_{i,n}$ be independent Bernoulli random variables with respective parameters $k_{i,1}/n, 
	k_{i,2}/n, \dots , k_{i,n}/n$, and let $X_i$ be a Poisson random 
	variable with parameter $\mu_i = \sum_{j \neq i} k_{i,j}/n $. Defining $Y_i = \sum_{j\neq i} Y_{i,j}  $, we have $\EE{e^{sY_i}} \le 
	\EE{e^{sX_i}}$ for all $s\in\real$.
\end{lemma}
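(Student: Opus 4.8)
The plan is to compare the moment generating functions factor by factor, exploiting the independence of the Bernoulli variables together with the elementary inequality $1+x \le e^x$, which holds for every real $x$. Concretely, the MGF of a Bernoulli random variable with parameter $p$ evaluated at $s$ is $1 - p + pe^s = 1 + p(e^s - 1)$, while the MGF of a Poisson random variable with parameter $\lambda$ is $e^{\lambda(e^s-1)}$; the claim will follow by bounding the former by the latter termwise and multiplying.

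First I would write, using independence of the $Y_{i,j}$,
\[
\EE{e^{sY_i}} = \prod_{j\neq i} \EE{e^{sY_{i,j}}} = \prod_{j\neq i} \Bpa{1 + \tfrac{k_{i,j}}{n}\pa{e^s-1}}.
\]
Then I would apply $1+x \le e^x$ with $x = \tfrac{k_{i,j}}{n}(e^s-1)$ — note this is valid for all $s\in\real$ regardless of the sign of $e^s-1$, since the inequality $1+x\le e^x$ has no restriction on $x$ — to get
\[
\prod_{j\neq i} \Bpa{1 + \tfrac{k_{i,j}}{n}\pa{e^s-1}} \le \prod_{j\neq i} e^{\frac{k_{i,j}}{n}(e^s-1)} = e^{\pa{\sum_{j\neq i} k_{i,j}/n}(e^s-1)} = e^{\mu_i(e^s-1)}.
\]
Finally I would identify the right-hand side as $\EE{e^{sX_i}}$, the MGF of a Poisson variable with mean $\mu_i$, which completes the argument.

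There is essentially no serious obstacle here: the only point requiring a moment's care is to observe that the bound $1+x\le e^x$ applies for negative $x$ as well, so the inequality on MGFs genuinely holds for all $s\in\real$ (and not merely $s\ge 0$), which is what is needed for the subsequent concentration arguments à la \citet{cappe:hal-00738209,2017arXiv170207211M}. The same computation applies verbatim to the Chung--Lu case by taking $k_{i,j} = w_i w_j$, which is why the lemma is stated for a general Bernoulli parameter vector.
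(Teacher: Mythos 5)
Your proof is correct and follows essentially the same route as the paper's: factor the moment generating function by independence, write each Bernoulli factor as $1 + \frac{k_{i,j}}{n}(e^s-1)$, apply $1+x\le e^x$ (valid for all real $x$, hence for all $s$), and recognize the product of exponentials as the Poisson MGF $e^{\mu_i(e^s-1)}$. Your explicit remark that the inequality holds for negative $x$ as well, so the bound is valid for all $s\in\real$, is a worthwhile clarification that the paper leaves implicit.
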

\begin{proof}
	Fix an arbitrary $s\in\real$ and $i\in[n]$. By direct calculations, we obtain
	\begin{align*}
	\mathbb{E} e^{s Y_i} 
	&= \prod_{j=1}^{n} \left(\mathbb{E} e^{s Y_{i,j}}\right) \le \prod_{j=1}^{n} \left(1 + \frac{k_{i,j}}{n}(e^s - 1)\right)
	\le  \prod_{j=1}^{n}  \exp\pa{\pa{k_{i,j}/n} \cdot  (e^s - 1)} ,
	\end{align*}
	where the last step follows from the elementary inequality $1+x \le e^{x}$ that holds for all $x\in\real$. The proof is concluded by 
	observing that $\mathbb{E} e^{s X_i} = \exp\pa{\mu\pa{e^{s}-1}}$ and using the definition of $\mu$.
\end{proof}
For simplicity, we also introduce the notation $\psi_{\mathbb{B} }(s) = \log \E{e^{sY_i}}$ and
$\phi_{\lambda}(s) = \log \mathbb{E} e^{s X_i} = \lambda (e^s - 1)$. 
The proof below repeatedly refers to the Fenchel conjugate of 
$\phi_\lambda$ defined as
\begin{align*}
\phi_{\lambda}^*(z) = \sup_{s \in \real} \{sz - \phi(s)\} = z \log \left(\frac{z}{\lambda}\right) + \lambda - z 
\end{align*}
for all $z\in\real$. Finally, we define
$d(\mu, \mu') = \mu' - \mu + \mu \log\left(\frac{\mu}{\mu'}\right)$ for all $\mu,\mu' > 0$, noting that $\phi_{\lambda}^*(z) = 
d(z,\lambda)$.

As for the actual proof of the theorem, the statement is proven in four steps. Within this proof, we refer to nodes as \emph{arms} and use 
$K$ to denote the size of 
$V_0$. We use the notation $f(t) = 3\log t$.
\paragraph{Step 1.} We begin by rewriting the expected number of draws $\E{N_a}$ for any suboptimal arm $a$ as 
\[
\mathbb{E}N_a = \EE{\sum_{t = K}^{T-1} \mathbb{I} \{A_{t+1} = a  \}} = \sum_{t = K}^{T-1} \mathbb{P} \{A_{t+1} = a  \}.
\]
By definition of our algorithm, at rounds $t > K$, we have $A_{t+1} = a$ only if $U_{a(t)} >   U_{a^*(t)}$. This leads to the 
decomposition:
\begin{align*}
\{A_{t+1} = a \} &\subseteq \{\mu^* \ge U_{a^*}(t)  \} \cup \{\mu^* < U_{a^*}(t) \text{ and } A_{t+1} = a \}
\\
&\subseteq \{\mu^* \ge U_{a^*}(t) 
\} \cup \{\mu^* < U_{a}(t) \text{ and } A_{t+1} = a \} 
\end{align*}
Steps~2 and~3 are devoted to bounding the probability of the two events above.

\paragraph{Step 2.} Here we aim to upper bound
\begin{equation}\label{eq:underest}
\sum_{t = K}^{T-1} \PP{\mu^* \ge U_{a^*}(t)}.
\end{equation}
Note, that $\left\{ U_{a^*}(t) \le \mu^*\right\} = \left\{\hat{\mu}_{a^*}(t) \le U_{a^*}(t) \le \mu^*\right\}.$
Since $d(\mu, \mu') = \mu' - \mu + \mu\log(\frac{\mu}{\mu'})$ is non-decreasing in its second argument on $[\mu, + \infty)$, and by 
definition of  $U_{a^*}  = \sup \{ \mu: d( \hat{\mu}_{a^*}(t), \mu) \le \frac{f(t)}{N_{a^*(t)}} \}  $ we have
\[
\left\{\mu^* \ge U_{a^*}(t) \right\} \subseteq \left\{ \hat{\mu}_{a^*}(t) \le U_{a^*}(t) \le \mu^* \text{ and }  d(\hat{\mu}_{a^*}(t), 
\mu^*) \ge 
\frac{f(t)}{N_{a^*}(t)} \right\},
\]
Taking a union bound over the possible values of $N_{a^*}(t)$ yields 
\begin{align*}\label{eq:arm-decomposition}
\left\{\mu^* \ge U_{a^*}(t) \right\} \subseteq  \bigcup_{n=1}^{t - K + 1} \left\{ \mu^* \ge \hat{\mu}_{a^*, n} \text{ and } 
d(\hat{\mu}_{a^*, n}, \mu^*) \ge \frac{f(t)}{n} \right\} =  \bigcup_{n=1}^{t - K + 1} D_n(t),
\end{align*}
where the event $D_n(t)$ is defined through the last step.
Since $d(\mu, \mu^*)$ is decreasing and continuous in its first argument on  $[0, \mu^*)$, either $d(\hat{\mu}_{a^*, n}, \mu^*) < 
\frac{f(t)}{n}$ on this interval and $D_n(t)$ is the empty set, or there exists a unique $z_n \in [0, \mu^*)$ such that $d(z_n, \mu^*) =  
\frac{f(t)}{n}$. Thus, we have
\[
\bigcup_{n=1}^{t - K + 1} D_n(t) \subseteq  \bigcup_{n=1}^{t - K + 1} \left\{ \hat{\mu}_{a^*, n} \le  z_n \right\}.
\]
For $\lambda < 0$, let us define $\psi(\lambda)$ as the cumulant-generating function of the sum of binomials with 
parameters $\mathbb{B} $, and let $\phi(\lambda)$ be the cumulant-generating function of a Poisson random variable with parameter 
$\mu^*$. With this notation, we have for \emph{any} $\lambda < 0$ that
\begin{align*}
\PP{ \hat{\mu}_{a^*, n} \le  z_n } &=   \PP{ \exp(\lambda \hat{\mu}_{a^*, n}) \ge  \exp(\lambda  z_n) } 
\\
&= \PP{ \exp\pa{\lambda \sum_{i = 1}^n Y_{a^*, i} - n\psi(\lambda)} \ge  \exp(n \lambda  z_n - n\psi(\lambda))} 
\\
&\le \left(\frac{\mathbb{E} e^{\lambda Y_{a^*,1}}}{e^{\psi(\lambda)}}\right)^n e^{-n(\lambda z_n - \psi(\lambda))}  
\le  e^{-n(\lambda z_n - \psi(\lambda))},
\end{align*}
where the last step uses the definition of $\psi(\lambda)$. Now fixing $\lambda^* = \argmax_{\lambda} \{ \lambda z_n - \phi(\lambda) \}  = 
\log(z_n/\mu^*) < 0$, we get by Lemma \ref{lem:domination} that
\begin{align*}
e^{-n(\lambda^* z_n - \psi(\lambda^*))} \le e^{-n(\lambda^* z_n - \phi(\lambda^*))}  = e^{-n \phi^*_{\mu^*}(z_n)} = e^{ - n d(z_n, 
	\mu^*)}~. 
\end{align*}
In view of the definition of $z_n$ and $f(t)$, this gives the bound
\begin{align*}
e^{ - n d(z_n, \mu^*)}  =  e^{- f(t)} = \frac{1}{t^3},
\end{align*}
which leads to
\begin{align*}
\sum_{t = K}^{T-1} \PP{\mu^* \ge U_{a^*}(t)} \le  \sum_{t = K}^{T-1}  \sum_{n=1}^{t - K + 1} \frac{1}{t^3} < 2,
\end{align*}
thus concluding this step.

\paragraph{Step 3.} 
In this step, we borrow some ideas by \cite[Proof Theorem~2, step~2]{2017arXiv170207211M}  to upper-bound the sum
\begin{equation} \label{sum_3rd}
B = \sum_{t = K}^{T-1} \PP{\mu^* < U_{a}(t) \text{ and } A_{t+1} = a }.
\end{equation}
Writing $\eta = \eta_a = \{\mu^* - \mu_a\}/3$ for ease of notation, we have
\begin{align*} 
\{\mu^* < U_{a}(t) \text{ and } A_{t+1} = a \} &\subseteq \ev{\mu^* - \eta < U_{a}(t) \text{ and } A_{t+1} = a} 
\\
&\subseteq 
\ev{d(\hat{\mu}_{a}(t), \mu^* - \eta ) \le f(t)/N_a(t) \text{ and } A_{t+1} = a}.
\end{align*}
Thus, we have
\begin{align*} 
B &\le \sum_{t = K}^{T-1} \PP{d(\hat{\mu}_{a}(t), \mu^* - \eta ) \le f(t)/N_a(t) \text{ and } A_{t+1} = a}
\\
&\le \sum_{n = 1}^{T} \PP{ d(\hat{\mu}_{a,n}, \mu^* - \eta ) \le f(T)/n }
\end{align*}
Defining the integer $n(\eta)$ as
$$
n(\eta) = \left \lceil \frac{f(T)}{d(\mu_{a} + \eta, \mu^* - \eta)} \right \rceil,
$$
we have $f(T)/n \le d(\mu_{a} + \eta, \mu^* - \eta)$ for all $n \ge n(\eta) $. Thus, we may further upper-bound $B$ as
\begin{align*}
B &\le n(\eta) - 1 + \sum_{n = n(\eta)}^{T} \PP{d(\hat{\mu}_{a,n}, \mu^* - \eta ) \le f(T)/n} 
\\
&\le \frac{f(T)}{d(\mu_{a} + \eta, \mu^* - \eta)}  +   \sum_{n = n(\eta)}^{T} \PP{d(\hat{\mu}_{a,n}, \mu^* - \eta ) \le d(\mu_{a} + 
	\eta, \mu^* - \eta)}. 
\end{align*}
By definition of $\eta$, we have
$$\ev{\hat{\mu}_{a,n}, \mu^* - \eta ) \le d(\mu_{a} + \eta, \mu^* - \eta)} \subseteq \ev{\hat{\mu}_{a,n} \ge \mu_a + \eta},$$
which implies
\begin{align*}
\sum_{n = n(\eta)}^{T}  \PP{d(\hat{\mu}_{a,n}, \mu^* - \eta ) \le d(\mu_{a} + \eta, \mu^* - \eta)} \le \sum_{n = n(\eta)}^{T} 
\PP{\hat{\mu}_{a,n} \ge \mu_a + \eta}.
\end{align*}
By an argument analogous to the one used in the previous step, we get for a well-chosen $\lambda$ that
\begin{align*}
&\sum_{n = n(\eta)}^{T} \PP{\hat{\mu}_{a,n} \ge \mu_a + \eta} 
\le \PP{\exp(\lambda \hat{\mu}_{a,n}) \ge \exp(\lambda(\mu_a+\eta))} 
\\
&\qquad\qquad\qquad= \sum_{n = n(\eta)}^{T} \PP{\exp(\lambda\sum_{i = 1}^n Y_{a, i} - n\psi(\lambda)) \ge \exp(n\lambda(\mu_a + \eta) 
	- n\psi(\lambda)) } \\
&\qquad\qquad\qquad\le \sum_{n = n(\eta)}^{T} \pa{\frac{\EE{e^{\lambda Y_{a,i}}}}{e^{\psi(\lambda)}}}^n e^{-n(\lambda(\mu_a+\eta) - 
	\psi(\lambda))} 
\\
&\qquad\qquad\qquad
\le \sum_{n = n(\eta)}^{T} e^{-n(\lambda(\mu_a+\eta) - \phi(\lambda))} 
= \sum_{n = n(\eta)}^{T} e^{-n d(\mu_a+\eta, \mu_a)} 
\\
&\qquad\qquad\qquad\le 
\sum_{n = n(\eta)}^{ \infty}  e^{-n d(\mu_a+\eta, 
	\mu_a)} \le \frac{1}{e^{d(\mu_a+\eta, \mu_a)} - 1} \le \frac{1}{d(\mu_a+\eta, \mu_a)},
\end{align*}
where the last step uses the elementary inequality $1+x \le e^{x}$ that holds for all $x\in \real$. 
\paragraph{Step 4.}
Putting together the results from the first three steps, we get
\begin{align*}
\mathbb{E} N_{a} \le  3  +  \frac{1}{d(\mu_a+\eta, \mu_a)} +  \frac{3 \log T}{d(\mu_{a} + \eta, \mu^* - \eta)}.
\end{align*}
We conclude by taking a second-order Taylor-expansion of $d(\mu_a + \eta,\mu_a)$ in $\eta$ to obtain for some  $\eta' \in [0,\eta]$
that
$$
d(\mu_a+\eta, \mu_a) = \frac{\eta^2}{2(\mu_a+\eta')} \ge \frac{\eta^2}{2(\mu_a+\eta)}.
$$
Taking into account the definition of $\eta$, we get
$$\frac{1}{d(\mu_a+\eta, \mu_a)} \le  \frac{2\mu^*}{\eta^2}.$$
An identical argument can be used to bound $\pa{d(\mu_{a} + \eta, \mu^* - \eta)}^{-1} \le 2\mu^*/\eta^2$.
\qed

\section{The proof of Theorem~\ref{thm:doubling}}\label{app:doubling}
We start by assuming that $\alpha < 1/2$.
Also notice that for a uniformly sampled set of nodes $U$, the probability of $U$ not containing a vertex from $V^*_{\alpha} $ is 
bounded as
\[
\PP{U\cap V^*_{\alpha}  = \emptyset} \le (1 - \alpha)^{|U|}.
\]
By the definition of $V_k$, this gives that the probability of not having sampled a  node from $V^*_{\alpha} $ in period $k$ of the 
algorithm is bounded as
\[
\PP{V_k \cap V^*_{\alpha} =\emptyset} \le (1 - \alpha)^{|V_k|} \le \beta^{-k}.
\]

For each period $k$, the expected regret can bounded as the weighted sum of two terms: the expected regret of \ducb$(V_k)$ in period $k$ 
whenever $V_k\cap V^*_{\alpha}$ is not empty, and the trivial bound $\Delta_{\alpha,\max} \beta^k$ in the complementary case. Using the 
above 
bound on the 
probability of this event and appealing to Theorem~\ref{thm:n_klucb} to bound the regret of \ducb$(V_k)$, we can bound the expected regret 
as
\begin{align*}
\EE{R^{\alpha}_T} &\le \sum_{k=1}^{k_{\max}} \pa{ \beta^k \frac{1}{\beta^k} \Delta_{\alpha, \max} +  \sum_{i \in V_k} \Delta_{\alpha,i} 
	\pa{ \frac{\mu^*\pa{2+3\log \beta^k}}{\delta_{\alpha,i}^2} +  3 }} 
\\
&\le k_{\max}  \Delta_{\alpha, \max}  + \sum_{k=1}^{k_{\max}}  \pa{  \sum_{i \in V_k}  \Delta_{\alpha,i} 
	\pa{ \frac{\mu^*\pa{2+3k \log \beta}}{\delta_{\alpha,i}^2} +  3 }} 
\\
&\le k_{\max}  \Delta_{\alpha, \max} +     \sum_{i \in \overline{V}}  \Delta_{\alpha,i}  \pa{  \pa{3 +  
		\frac{2\mu^*}{\delta_{\alpha,i}^2} } (k_{\max} + 1) +  \frac{3 \log \beta (k_{\max}+1)^2 }{ 2\delta_{\alpha,i}^2}}~.
\end{align*}
The proof of the first statement is concluded by upper-bounding the number of restarts up to time $T$ as
$k_{\max} \le \frac{\log T}{\log \beta}$.

The second statement is proven by an argument analogous to the one used in the proof of Theorem~\ref{thm:reg_knowT}, and straightforward 
calculations.
\qed

\end{document}